\newcommand{\eps}{\varepsilon}
\newcommand{\disc}{\mathsf{disc}}
\DeclareMathOperator{\kde}{\textsc{kde}}
\DeclareMathOperator{\poly}{\mathsf{poly}}
\newcommand{\km}{\ensuremath{\hat{\mu}}}
\renewcommand{\H}{\ensuremath{{\Eu{H}_K}}}
\newcommand{\Eu}[1]{\ensuremath{\EuScript{#1}}}
\newcommand{\R}{\ensuremath{\mathbb{R}}}
\newcommand{\norm}[1]{\left\lVert#1\right\rVert}
\newcommand{\setdef}[2]{\left\{ #1 \mid #2 \right\}}
\newcommand{\abs}[1]{\left| #1 \right|}
\title{Near-Optimal Coresets of Kernel Density Estimates}
\author{Jeff M. Phillips\thanks{Thanks to supported by NSF CCF-1350888, IIS-1251019, ACI-1443046, CNS-1514520, and CNS-1564287.} 
	\\University of Utah 
	\and Wai Ming Tai
	\\ University of Utah}
\begin{document}

\maketitle

\begin{abstract}
	We construct near-optimal coresets for kernel density estimates for points in $\mathbb{R}^d$ when the kernel is positive definite.
	Specifically we show a polynomial time construction for a coreset of size $O(\sqrt{d}/\eps \cdot \sqrt{\log 1/\eps} )$, and we show a near-matching lower bound of size $\Omega(\min\{\sqrt{d}/\eps, 1/\eps^2\})$.  
	When $d\geq 1/\eps^2$, it is known that the size of coreset can be $O(1/\eps^2)$.
	The upper bound is a polynomial-in-$(1/\eps)$ improvement when $d \in [3,1/\eps^2)$ 
	 and the lower bound is the first known lower bound to depend on $d$ for this problem. 
	Moreover, the upper bound restriction that the kernel is positive definite is significant in that it applies to a wide-variety of kernels, specifically those most important for machine learning.
	This includes kernels for information distances and the sinc kernel which can be negative.
\end{abstract}


\section{Introduction}
Kernel density estimates are pervasive objects in data analysis.  They are the classic way to estimate a continuous distribution from a finite sample of points~\cite{Sil86,Sco92}.  With some points negatively weighted, they are the prediction function in kernel SVM classifiers~\cite{SS02}.  They are the core of many robust topological reconstruction approaches~\cite{PWZ15,FLRWBS14,BMT17}.  
And they arise in many other applications including 
mode estimation~\cite{arias2016estimation}, 
outlier detection~\cite{schubert2014generalized}, 
regression~\cite{fan1996local}, 
and clustering~\cite{rinaldo2010generalized}.  

Generically, consider a dataset $P \subset \R^d$ of size $n$, and a kernel 
$K : \R^d \times \R^d \to \R$, for instance the Gaussian kernel $K(x,p) = \exp(-\alpha^2 \|x-p\|^2)$ with $1/\alpha$ as a bandwidth parameter.  Then a kernel density estimate is defined at any point $x \in \R^d$ as $\kde_P(x) = \frac{1}{n} \sum_{p \in P} K(x,p)$.  

Given that it takes $O(n)$ time to evaluate $\kde_P$, and that data sets are growing to massive sizes, in order to continue to use these powerful modeling objects, a common approach is to replace $P$ with a much smaller data set $Q$ so that $\kde_Q$ approximates $\kde_P$.  While statisticians have classically studied various sorts of average deviations ($L_2$~\cite{Sil86,Sco92} or $L_1$ error~\cite{DG84}), for most modern data modeling purposes, a worst-case $L_\infty$ is more relevant (e.g., for preserving classification margins~\cite{SS02}, density estimates~\cite{ZP15}, topology~\cite{PWZ15}, and hypothesis testing on distributions~\cite{GBRSS12}).  Specifically this error guarantee preserves
\[
\| \kde_P - \kde_Q \|_\infty = \sup_{x \in \R^d} | \kde_P(x) - \kde_Q(x) | \leq \eps.
\]
We call such a set $Q$ an \emph{$\eps$-KDE coreset} of $P$.  
In this paper we study how small can $Q$ be as a function of error $\eps$, dimension $d$, and properties of the kernels.%
\footnote{This combines results published in SOCG 2018~\cite{PT18} and SODA 2018~\cite{phillips2018improved}.}

\subsection{Background on Kernels and Related Coresets}
\label{sec:background}

Traditionally the approximate set $Q$ has been considered to be constructed as a random sample of $P$~\cite{Sil86,Sco92,joshi2011comparing}, sometimes known as a Nystr\"{o}m approximation~\cite{DM05}.  However, in the last decade, a slew of data-aware approaches have been developed that can obtain a set $Q$ with the same $L_\infty$ error guarantee, but with considerably smaller size.  

To describe the random sample results and the data-aware approaches, we first need to be more specific about the properties of the kernel functions.  We start with \emph{positive definite} kernels, the central class required for most machine learning approaches to work~\cite{HSS06}.

\paragraph{Postive definite kernels.}
Consider a kernel $K : \Eu{D} \times \Eu{D} \to \R$ defined over some domain $\Eu{D}$ (often $\R^d$).  It is called a \emph{positive definite kernel} if any $m$ points $x_1, x_2, \ldots, x_m \in \Eu{D}$ are used to define an $m \times m$ Gram matrix $G$ so each $i,j$ entry is $G_{i,j} = K(x_i, x_j)$, and the matrix $G$ is positive definite.  Recall, a symmetric matrix $G$ is positive definite if any vector $z \in \R^n$ that is not all zeros satisfies $z^T G z > 0$.  
Moreover, a positive definite matrix $G$ can always be decomposed as a product $H^T H$ with real-valued matrix $H$.  

Also
if $K$ is positive definite, it is said to have the reproducing property~\cite{aronszajn1950theory,Wah99}.
This implies that $K(p,x)$ is an inner product in a reproducing kernel Hilbert space (RKHS) $\H$.  Specifically, there exists a lifting map $\phi_K : \mathbb{R}^d \to \H$ where $\phi_K(x) = K(x, \cdot)$ and so $K(p,x) = \langle \phi_K(p), \phi_K(x) \rangle_{\H}$.  Moreover the entire set $P$ can be represented as $\Phi_K(P) = \sum_{p \in P} \phi_K(p)$, which is a single element of $\H$ and has norm $\|\Phi_K(P)\|_{\H} = \sqrt{\sum_{p \in P} \sum_{p' \in P} K(p,p')}$.  A single point $x \in \mathbb{R}^d$ also has a norm $\|\phi_K(x)\|_{\H} = \sqrt{K(x,x)} = 1$ in this space.  
A \emph{kernel mean} of a point set $P$ and a reproducing kernel $K$ is defined 
$
\km_P = \frac{1}{|P|} \sum_{p \in P} \phi_K(p) = \Phi_K(P)/|P| \in \H.
$

\begin{wraptable}{r}{95mm}	
	\centering{\textsf{Example Positive Definite Kernels}} 
	\begin{tabular}{r  c  c}
		\hline 
		& $K(x,p)=$ & domain \\
		\hline
		Gaussian & $\exp(-\alpha^2 \|x-p\|^2)$ & $\R^d$
		\\
		Laplacian & $\exp(-\alpha \|x-p\|)$ & $\R^d$
		\\
		Exponential & $\exp(-\alpha (1-\langle x,p \rangle))$ & $\mathbb{S}^d$
		\\
		JS & $\exp(-\alpha (H(\frac{x+p}{2}) - \frac{H(x) + H(p)}{2}))$ & $\Delta^d$
		\\
		Helinger & $\exp(-\alpha \sum_{i=1}^d (\sqrt{x_i} - \sqrt{p_i})^2)$ & $\Delta^d$
		\\
		Sinc & $\frac{\sin(\alpha \|x-p\|)}{\alpha \|x-p\|}$ & $\R^{d \leq 3}$
		\\
		\hline
	\end{tabular}
\end{wraptable}

There are many positive definite kernels, and we will next highlight a few.  
We normalize all kernels so $K(x,x) = 1$ for all $x \in \Eu{D}$ and therefore $\abs{K(x,y)} \leq 1$ for all $x,y \in \Eu{D}$.  
We will use $\alpha > 0$ as a parameter, where $1/\alpha$ represents the bandwidth, or smoothness of the kernel.  
For $\Eu{D} = \R^d$ the most common positive definite kernels~\cite{SGFSL10} are the Gaussian (described earlier) and the Laplacian, defined $\exp(-\alpha \|x-y\|)$ for $x,y \in \R^d$.  
Another common domain is $\Delta^d = \{x \in \R^{d+1} \mid \sum_{i=1}^d x_i = 1, \; x_i \geq 0\}$, for instance in representing discrete distributions such as normalized counts of words in a text corpus or fractions of tweets per geographic region.  Common positive definite kernels for $x,y \in \Delta^d$ include 
the Hellinger kernel $\exp(-\alpha \sum_{i=1}^d(\sqrt{x_i}-\sqrt{y_i})^2)$ and 
the Jensen-Shannon (JS) divergence kernel $\exp(-\alpha(H(\frac{x+y}{2}) - \frac{H(x)+H(y)}{2}))$, where $H(x) = \sum_{i=1}^d - x_i \log x_i$ is entropy~\cite{hein2005hilbertian}.  
In other settings it is more common to normalize data points $x$ to lie on a sphere $\mathbb{S}^d = \{x \in \R^{d+1} \mid \|x\| = 1\}$.  Then with $x,y \in \mathbb{S}^d$, the exponential kernel $\exp(-\alpha (1-\langle x, y \rangle))$ is positive definite~\cite{HSS06}.  
Perhaps surprisingly, positive definite kernels do not need to satisfy $K(x,y) \geq 0$.  For $x,y \in \R^d$, the sinc kernel is defined as $\frac{\sin(\alpha \|x-y\|)}{\alpha \|x-y\|}$ and is positive definite for $d = \{1,2,3\}$~\cite{schoenberg1938metric}.

\paragraph{Other classes of kernels.}
There are other ways to characterize kernels, which provide sufficient conditions for various other coreset bounds.  For clarity, we describe these for kernels with a domain of $\R^d$, but they can apply more generally.  

We say a kernel $K$ is \emph{$C_K$-Lipschitz} if, for any $x,y,z \in \R^d$, $|K(x,z) - K(y,z)| \leq C_K \cdot \|x-y\|$.  This ensures that the kernels do not fluctuate too widely, a necessity for robustness, but also prohibits ``binary'' kernels; for instance the binary \emph{ball kernel}  is defined 
$K(x,y) = \{1 \text{ if } \|x-y\| \leq r; \text{ and } 0 \text{ otherwise}\}$.  Such binary kernels are basically range counting queries (for instance the ball kernel corresponds with a range defined by a ball), and as we will see, this distinction allows the bounds for $\eps$-KDE coresets to surpass lower bounds for coresets for range counting queries.  
Aside from the ball kernel, all kernels we discuss in this paper will be $C_K$-Lipschitz. 

Another way to characterize a kernel is with their shape.  We can measure this by considering binary ranges defined by super-level sets of kernels.  For instance, given a fixed $K$ and $x \in \R^d$, and a threshold $\tau \in \R$, the \emph{super-level set} is $\{p \in \R^d \mid K(x,p) \geq \tau\}$.  For a fixed $K$, the family of such sets over all choices of $x$ and $\tau$ describes a range space with ground set $\R^d$.  For many kernels the VC-dimension of this range space is bounded; in particular, for common kernels, this range is equivalent to those defined by balls in $\R^d$.  Notably, the sinc kernel, which is positive-definite for $\R^d$ with $d \leq 3$ does not correspond to a range space with bounded VC-dimension.  

Finally, we mention that kernels being \emph{characteristic}~\cite{SGFSL10} is an important property for many bounds that rely on $\phi_K$.  
It includes most, but not all positive definite kernels including Gaussian and Laplace kernels; the notable exceptions are the Euclidean dot product $\langle x, y \rangle$, and anything derivative of it such as the exponential kernel.
A characteristic kernel requires that the kernel $K$ is positive definite, and the mapping $\phi_K(x)$ is injective and ultimately this implies its induced distance 
\[
D_K(p,x) = \sqrt{\|\phi_K(x)\|_{\mathcal{H}_K}^2 + \|\phi_K(p)\|_{\mathcal{H}_K}^2 - 2\langle\phi_K(p),\phi_K(x)\rangle_{\mathcal{H}_K}}
\] 
is a metric~\cite{muller1997integral,SGFSL10}.  

\paragraph{Kernel distance.}
This $D_K$ is known as the \emph{kernel distance}~\cite{hein2005hilbertian,glaunesthesis,
	joshi2011comparing,PhillipsVenkatasubramanian2011} (or \emph{current distance} or \emph{maximum mean discrepancy}).
If we define the similarity between the two point sets as 
\[
\kappa(P,Q) = \frac{1}{|P|}\frac{1}{|Q|} \sum_{p \in P} \sum_{q \in Q} K(p,q) = \langle\km_P,\km_Q\rangle_{\mathcal{H}_K},
\]
then the kernel distance can be defined more generally between point sets (implicitly endowed with uniform probability measures) as 
\[
D_K(P,Q) = \sqrt{\kappa(P,P) + \kappa(Q,Q) - 2 \kappa(P,Q)} = \|\km_P - \km_Q\|_{\mathcal{H}_K}.
\]
When $Q$ is a single point $x$, then $\kappa(P,x) = \langle\km_P,\phi_K(x)\rangle_{\mathcal{H}_K} =\kde_P(x)$.

\paragraph{Relationship between kernel mean and $\eps$-KDE coresets.}
It is possible to convert between bounds on the subset size required to approximate the kernel mean and an $\eps$-kernel coreset of an associated kernel range space.  But they are not symmetric.  


The Koksma-Hlawka inequality (in the context of reproducing kernels~\cite{CWS10,SZSGS08} when $K(x,x) = 1$) states that
\[
\|\kde_P - \kde_Q\|_\infty \leq  \|\km_P - \km_Q\|_{\Eu{H}_K}.
\]
Since $\kde_P(x) = \kappa(P, x) = \langle \km_P, \phi(x) \rangle_\H$ and via Cauchy-Schwartz, for any $x \in \mathbb{R}^d$  
\[
| \kde_P(x) - \kde_Q(x) |
=
| \langle \km_P , \phi(x) \rangle_\H - \langle \km_Q, \phi(x) \rangle_\H |
= 
| \langle \km_P - \km_Q, \phi(x) \rangle_\H |
\leq 
\| \km_P - \km_Q \|_\H.  
\]
Thus to bound $\max_{x \in \mathbb{R}^d} |\kde_P(x) - \kde_Q(x)| \leq \eps$ it is sufficient to bound $\|\km_P - \km_Q\|_\H \leq \eps$.  

On the other hand, if we have a bound $\max_{x \in \mathbb{R}^d} |\kde_P(x) - \kde_Q(x)| \leq \eps$, then we can only argue that $\|\km_P - \km_Q\|_\H \leq \sqrt{2 \eps}$.  
We observe that
\begin{align*}
\|\km_P - \km_Q\|_\H^2 
&= 
D_K(P,Q)^2 = \kappa(P,P) + \kappa(Q,Q) - 2 \kappa(P,Q)
\\ & =
\frac{1}{|P|} \sum_{p \in P} \kde_P(p) + \frac{1}{|Q|} \sum_{q \in Q} \kde_Q(q)  - \frac{1}{|P|} \sum_{p \in P} \kde_Q(p)  - \frac{1}{|Q|} \sum_{q \in Q} \kde_P(q)
\\ &=
\frac{1}{|P|} \sum_{p \in P} (\kde_P(p) - \kde_Q(p)) + \frac{1}{|Q|} \sum_{q \in Q} (\kde_Q(q) - \kde_P(q))
\\ & \leq
\frac{1}{|P|} \sum_{p \in P} (\eps) + \frac{1}{|Q|} \sum_{q \in Q} (\eps)
= 2\eps.  
\end{align*}
Taking a square root of both sides leads to the implication.  

Unfortunately, the second reduction does not map the other way; a bound on $\|\km_P - \km_Q\|_H^2$ only ensures an average ($L_2$ error) for $\kde_P$ holds, not the desired stronger $L_\infty$ error.  
Indeed, the above bound is tight.   Consider $P$ and $Q$ so $|P| = |Q| = 1/\eps$, and all pairs of points $x,y \in P \cup Q$ are sufficiently far away from each other so $K(x,y) \leq \eps^2$, and hence it must be that $\| \kde_P - \kde_Q \|_\infty \leq \eps^2$.  
However, then we can also bound
\begin{align*}
	\|\km_P-\km_Q\|_\H^2
	& =
	\frac{1}{\abs{P}}\frac{1}{\abs{Q}}\left( \sum_{p \in P} \sum_{p'\in P}K(p,p') - 2\sum_{p\in P}\sum_{q\in Q}K(p,q) + \sum_{q \in Q}\sum_{q'\in Q}K(q,y) \right) \\
	&  \geq
	\frac{1}{\abs{P}}\frac{1}{\abs{Q}}\left( |P| + (|P| -1)^2 0  - 2|P| |Q| \eps^2 + |Q| + (|Q|-1)^2 0 \right) \\
	&  =
	\eps^2\left(  \abs{P} - 2\abs{P}\abs{Q}\eps^2   +\abs{Q}  \right) \\
	& =
	2 \eps - 2 \eps^2 = \Omega(\eps).  
\end{align*}

\paragraph{Discrepancy-based approaches.}
Our approach for creating an $\eps$-KDE coreset will follow a technique for creating range counting coresets~\cite{CM96,Phi08,BS80}.  It focuses on assigning a coloring $\chi : P \to \{-1,+1\}$ to $P$.  Then retains either all $P_+ = \{p \in P \mid \chi(p) = +1\}$ or the remainder $P_-$, and recursively applies this halving until a small enough coreset $Q$ has been retained.  

Classically, when the goal is to compute a range counting coreset for a range space $(P, \Eu{R})$, then the specific goal of the coloring is to minimize discrepancy
\[
\disc_R(P,\chi) =  \abs{\sum_{p\in P \cap R} \chi(p)}
\]
over all choices of ranges $R \in \Eu{R}$.  In the KDE-setting we consider a \emph{kernel range space} $(P,\Eu{K})$ where $\Eu{K} = \{K(x,\cdot) \mid x \in \Eu{D}\}$ defined by kernel $K : \Eu{D} \times \Eu{D} \to \R$ and a fixed domain $\Eu{D}$ which is typically assumed, and usually $\Eu{D} = \R^d$.  
We instead want to minimize the kernel discrepancy
\[
\disc(P,\chi,x) =  \abs{\sum_{p\in P} \chi(p) K(x,p)}.
\]
Now in contrast to the case with the binary range space $(P,\Eu{R})$, \emph{each} point $p \in P$ is \emph{partially} inside the ``range'' where the amount inside is controlled by the kernel $K$.  
Understanding the quantity 
\[
\disc(n,\Eu{K}) = \max_{P : |P| = n} \min_\chi \max_{x \in \Eu{D}} \disc(P,\chi, x)
\]
is key.  If for a particular $\Eu{K}$ we have $\disc(n,\Eu{K}) = n^{\tau}$ or $\disc(n, \Eu{K}) = \log^\eta n$, then applying the recursive halving algorithm obtains an $\eps$-KDE coreset of size $O(1/\eps^{1/(1-\tau)})$ and $O((1/\eps) \log^\eta (1/\eps))$, respectively~\cite{phillips2013varepsilon}.

%

\subsection{Known Results on KDE Coresets}
\label{sec:previous}
In this section we survey known bounds on the size $|Q|$ required for $Q$ to be an $\eps$-KDE coreset.  
We assume $P \subset \mathbb{R}^d$, it is of size $n$, and $P$ has a diameter 
$\Delta  = \alpha \max_{p, p' \in P} \|p-p'\|$, 
where $1/\alpha$ is the bandwidth parameter of the kernel.  
We sometimes allow a $\delta$ probability that the algorithm does not succeed.  
Results are summarized in Table \ref{tbl:compare}.

\begin{table}[t]
	\begin{tabular}{r c l l}
		\hline
		Paper & Coreset Size & Restrictions & Algorithm
		\\ \hline 
		Joshi \etal~\cite{joshi2011comparing} & $d/\eps^2$ &  bounded VC & random sample
		\\
		Fasy \etal~\cite{FLRWBS14} &  $(d/\eps^2) \log(d \Delta / \eps)$ & Lipschitz & random sample
		\\ 
		Lopaz-Paz \etal~\cite{LopazPaz15} & $1/\eps^2$ & characteristic kernels & random sample
		\\
		Chen \etal~\cite{CWS10} & $1/(\eps r_P)$ & characteristic kernels & iterative
		\\ 
		Bach \etal~\cite{bach2012equivalence} & $(1/r_P^2)\log (1/\eps)$ & characteristic kernels & iterative  
		\\
		Bach \etal~\cite{bach2012equivalence} & $1/\eps^2$  & characteristic kernels, weighted & iterative
		\\
		Lacsote-Julien \etal~\cite{lacoste2015sequential} & 	$1/\eps^2$  & characteristic kernels& iterative\\
		Harvey  and Samadi~\cite{harvey2014near} & $(1/\eps)\sqrt{n}\log^{2.5}(n)$  & characteristic kernels & iterative
		\\
		Cortez and Scott \cite{CS15} & $k_0$ \;\; ($\leq (\Delta/ \eps)^d$) & Lipschitz; \; $d$ is constant & iterative
		\\
		Phillips~\cite{phillips2013varepsilon} & $(1/\eps)^{\frac{2d}{d+2}} \log^{\frac{d}{d+2}} (1/\eps)$ & Lipschitz; \; $d$ is constant & discrepancy-based
		\\
		Phillips~\cite{phillips2013varepsilon} & $\Theta(1/\eps)$ & $d=1$ & sorting
		\\\hline
	\end{tabular} 
	\caption{Asymptotic $\eps$-KDE coreset sizes in terms of error $\eps$ and dimension $d$.   
		\label{tbl:compare}}
\end{table}

\paragraph{Halving approaches.}
Phillips~\cite{phillips2013varepsilon} showed that kernels with a bounded Lipschitz factor (so $|K(x,p) - K(x,q)| \leq C \|p-q\|$ for some constant $C$, including Gaussian, Laplace, and Triangle kernels which have $C = O(\alpha)$), admit coresets of size $O((\alpha/\eps) \sqrt{\log(\alpha/\eps)})$ in $\mathbb{R}^2$.  For points in $\mathbb{R}^d$ (for $d>1$) this generalizes to a bound of $O((\alpha/\eps)^{2d/(d+2)} \log^{d/(d+2)} (\alpha/\eps))$.  
That paper also observed that for $d=1$, selecting evenly spaced points in the sorted order achieves a coreset of size $O(1/\eps)$.  

\paragraph{Sampling bounds.}
Denote $\delta$ to be the failure probability.
Joshi \etal~\cite{joshi2011comparing} showed that a random sample of size $O((1/\eps^2)(d + \log(1/\delta)))$ results in an $\eps$-kernel coreset for any centrally symmetric, non-increasing kernel.  This works by reducing to a VC-dimensional~\cite{LLS01} argument with ranges defined by balls.  

Fasy~\etal~\cite{FLRWBS14} 
provide an alternative bound on how random sampling preserves the $L_\infty$ error in the context of statistical topological data analysis.  Their bound can be converted to require size  
$O((d/\eps^2) \log(d \Delta/\eps \delta))$, which can improve upon the bound of Joshi~\cite{joshi2011comparing} if $K(x,x) > 1$ (otherwise, herein we only consider the case $K(x,x) =1$).  

Examining characteristic kernels which induce an RKHS in that function space leads to a simpler bound of $O((1/\eps^2) \log(1/\delta))$~\cite{MFSS17}; see Lopaz-Paz \etal~\cite{LopazPaz15} for a simple and complete proof.

\paragraph{Iterative approaches.}
Motivated by the task of constructing samples from Markov random fields, Chen \etal~\cite{CWS10} introduced a technique called \emph{kernel herding} suitable for characteristic kernels.  
They showed that iteratively and greedily choosing the point $p \in P$ which when added to $Q$ most decreases the quantity $\|\km_P - \km_Q\|_{\Eu{H}_K}$, will decrease that term at rate $O(r_P/t)$ for $t = |Q|$.  Here $r_P$ is the largest radius of a ball centered at $\km_P \in \H$ which is completely contained in the convex hull of the set $\{\phi(p) \mid p \in P\}$.  
They did not specify the quantity $r_P$ but claimed that it is a constant greater than $0$.

Bach \etal~\cite{bach2012equivalence} showed that this algorithm can be interpreted under the Frank-Wolfe framework~\cite{Cla10,gartner2009coresets}.  Moreover, they argue that $r_P$ is not always a constant; in particular when $P$ is infinite (e.g., it represents a continuous distribution) then $r_P$ is arbitrarily small.  
However, when $P$ is finite, they prove that $1/r_P$ is finite without giving an explicit bound.  
They also make explicit that after $t$ steps, they achieve $\|\km_P - \km_{Q,w}\|_{\Eu{H}_K} \leq 4 / (r_P \cdot t)$.  They also describe a method which includes ``line search'' to create a weighted coreset $(Q,w)$, so each point $q \in Q$ is associated with a weight $w(q) \in [0,1]$ so $\sum_{q \in Q} w(q) = 1$; then $\km_{Q,w} = \sum_{q \in Q} w(q) \phi(q)$.  For this method they achieve 
$
\|\km_P - \km_{Q,w}\|_{\Eu{H}_K} \leq \sqrt{\exp(-r_P^2 t)}.
$
Similarly, other recent progress in Frank-Wolfe analysis focuses on settings which achieve a ``linear'' rate of roughly $O(c^{-t})$~\cite{JL15,FG16}.  
However, such faster linear convergence, unless some specific properties of the data exist, would violate our lower bound, and thus is not possible in general.  

Bach \etal~\cite{bach2012equivalence} also mentions a bound $\|\km_P - \km_{Q,w}\|_{\Eu{H}_K} \leq \sqrt{8 / t}$, that is independent of $r_P$.  It relies on very general bound of Dunn~\cite{dunn1980convergence} which uses line search, or one of Jaggi~\cite{Jag13} which uses a fixed but non-uniform set of weights.  These show this convergence rate for any smooth function, including $\|\km_P - \km_{Q,w}\|^2_{\Eu{H}_K}$; taking the square root provides a bound for $\|\km_P - \km_{Q,w}\|_{\Eu{H}_K} \leq \eps$ after $t = O(1/\eps^2)$ steps.  
This result is a weighted coreset, and it has been further improved to be unweighted~\cite{lacoste2015sequential}.

Harvey and Samadi ~\cite{harvey2014near} further revisited kernel herding in the context of a general mean approximation problem in $\mathbb{R}^{d'}$.  That is, consider a set $P'$ of $n$ points in $\mathbb{R}^{d'}$, find a subset $Q' \subset P'$ so that $\|\bar P' - \bar Q'\| \leq \eps$, where $\bar P'$ and $\bar Q'$ are the Euclidean averages of $P'$ and $Q'$, respectively.  This maps to the kernel mean problem with $P' = \{\phi_K(p) \mid p \in P\}$, and with the only bound of $d'$ as $n$.  
They show that the $r_P$ term can be manipulated by affine scaling, but that in the worst case (after such transformations via John's theorem) it is $O(\sqrt{d'} \log^{2.5} (n))$, and hence 
show one can always set $\eps = O(\sqrt{d'} \log^{2.5} (n) / t) = O((1/t) \sqrt{n} \log^{2.5}(n))$.  Lacsote-Julien \etal~\cite{lacoste2015sequential} showed that one can always compress $P'$ to another set $P''$ of size $n = O(1/\eps^2)$ (or for instance use the random sampling bound of Lopaz-Paz \etal~\cite{LopazPaz15}, ignoring the $\log(1/\delta)$ factor); then solving for $t$ yields $t = O((1/\eps^2) \log^{2.5}(1/\eps))$.  

Harvey and Samadi also provide a lower bound to show that after $t$ steps, the kernel mean error may be as large as $\Omega(\sqrt{d'}/t)$ when $t = \Theta(n)$.  
This seems to imply (using the $d' = \Omega(n)$ and a $P'$ of size $\Theta(1/\eps^2)$) that we need $t = \Omega(1/\eps^2)$ steps to achieve $\eps$-error for kernel density estimates.  But this would contradict the bound of Phillips~\cite{phillips2013varepsilon}, which for instance shows a coreset of size $O((1/\eps) \sqrt{\log (1/\eps)})$ in $\mathbb{R}^2$.  More specifically, it uses $t = \Theta(d')$ steps to achieve this case, so if $d' = n = \Theta(1/\eps^2)$ then this requires asymptotically as many steps as there are points.  Moreover, a careful analysis of their construction shows that the corresponding points in $\mathbb{R}^d$ (using an inverse projection $\phi_K^{-1} : \Eu{H}_K \to \mathbb{R}^d$ to a set $P \in \mathbb{R}^d$) would have them so spread out that $\kde_P(x) < c/\sqrt{n}$ (for constant $c$, so $= O(\eps)$ for $n = 1/\eps^2$) for all $x \in \mathbb{R}^d$; hence it is easy to construct a $2/\eps$ size $\eps$-kernel coreset for this point set.  
This distinction between bounds is indeed related to the difference between kernel mean approximations and $\eps$-KDE coreset approximations.

\paragraph{Discretization bounds.}
Another series of bounds comes from the Lipschitz factor of the kernels:  $C = \max_{x,y,z \in \mathbb{R}^d} \frac{K(z,x) - K(z,y)}{\|x-y\|}$.  
For most kernels, $C$ is small constant.  
Thus, we can for instance, lay down an infinite grid $G_{\eps} \subset \mathbb{R}^d$ of points so for all $x \in \mathbb{R}^d$ there exists some $g \in G_\eps$ such that $\|g-x\| \leq \eps/C$, and  that means the side length of the grid is $2\eps/(C\sqrt{d})$.  

Then we can map each $p \in P$ to $p_g$ the closest point $g \in G_\eps$ (with multiplicity), resulting in $P_G$.  By the additive property of $\kde$, we know that $\|\kde_P - \kde_{P_G}\|_\infty \leq \eps$.  

Cortes and Scott~\cite{CS15} provide another approach to the sparse kernel mean problem.  They run Gonzalez's algorithms~\cite{Gon85} for $k$-center on the points $P \in \mathbb{R}^d$ (iteratively add points to $Q$, always choosing the furthest point from any in $Q$) and terminate when the furthest distance to the nearest point in $Q$ is $\Theta(\eps)$.  Then they assign weights to $Q$ based on how many points are nearby, similar to in the grid argument above.  They make an ``incoherence'' based argument, specifically showing that $\|\km_P - \km_Q\| \leq \sqrt{1-v_Q}$ where $v_Q = \min_{p \in P} \max_{q \in Q} K(p,q)$.  This does not translate meaningfully in any direct way to any of the parameters we study.  However, we can use the above discretization bound to argue that if $\Delta$ is bounded, then this algorithm must terminate in $O((\Delta/\eps)^d)$ steps.

\paragraph{Lower bounds.}
Finally, there is a simple lower bound of size $\lceil 1/\eps \rceil - 1$ for an $\eps$-coreset $Q$ for kernel density estimates~\cite{phillips2013varepsilon}.  Consider a point set $P$ of size $1/\eps-1$ where each point is very far away from every other point, then we cannot remove any point otherwise it would create too much error at that location.

\subsection{Our Results}
\label{sec:our-results}

\begin{wraptable}{r}{75mm}
\vspace{-5mm}
	\begin{tabular}{r  c  c c}
		\hline 
		$d$ & Upper & Lower & 
		\\ \hline
		$1$ & $1/\eps$ & $1/\eps$ & \cite{phillips2013varepsilon}
		\\
		\hspace{-1mm}$[2,1/\eps^2)$ & $\sqrt{d}/\eps \cdot \sqrt{\log \frac{1}{\eps}}$ & $\sqrt{d}/\eps$ & \textbf{new}${^\star}$
		\\
		$\geq 1/\eps^2$ & $1/\eps^2$ & $1/\eps^2$ & \cite{lacoste2015sequential},\textbf{new}${^\dagger}$\hspace{-1mm}
		\\ \hline
	\end{tabular}
	\caption{
		\label{tbl:results} Size bounds for $\eps$-KDE coresets for Gaussian and Laplace kernels; also holds under more general assumption, see text. \;\;\;\;\;\;\;\;\;\;\;\;\;\;\;\;
		($\star$) For $d=2$, \cite{phillips2013varepsilon} matches upper bound. \;\;\;\;\;
		($\dagger$) For the lower bound result.  
	}
\end{wraptable}

We show a new upper bound on the size of an $\eps$-KDE coreset of $O((1/\eps) \sqrt{d \log (1/\eps)})$ in Section \ref{upper}.   The main restriction on the kernel $K$ is that it is positive definite, a weaker bound than the similar characteristic assumption.  There are also fairly benign restrictions (in Euclidean-like domains) that $K$ is Lipschitz and only has a value greater than $1/|P|$ (or $\geq \eps^2$) for pairs of points both within a bounded region; these are due to the specifics of some geometric preprocessing.  
Noteably, this upper bound applies to a very wide range of kernels including 
the sinc kernel, whose super-level sets do not have bounded VC-dimension and is not characteristic, so no non-trivial $\eps$-KDE coreset bound was previously known.  
Moreover, unlike previous discrepancy-based approaches, we do not need to assume the dimension $d$ is constant.  

We then show a nearly-matching lower bound on the size of an $\eps$-KDE coreset of $\Omega(\sqrt{d}/\eps)$, in Section \ref{lower}.  This construction requires a standard restriction that it is shift- and rotation-invariant, and a benign one that it is somewhere-steep (see Section \ref{lower}), satisfied by all common kernels.    This closes the problem for many kernels (e.g., Gaussians, Laplace), except for a $\sqrt{\log (1/\eps)}$ factor when $1 < d < 1/\eps^2$.  The gap filled by the new bounds are shown in Table \ref{tbl:results}.

\paragraph{Our approach and context.}

Bounding the size $\eps$-KDE coresets can be reduced to bounding kernel discrepancy.  
The range space discrepancy problem, for a range space $(P,\Eu{R})$, has been widely studied in multiple areas~\cite{Mat10,Cha01}.  
For instance, Tusnady's problem restricts $\Eu{R}$ to represent axis-aligned rectangles in $\mathbb{R}^d$, has received much recent focus~\cite{matousek2014factorization}.
To achieve their result, Matousek \etal~\cite{matousek2014factorization} use a balancing technique of Banaszcyk~\cite{banaszczyk1998balancing} on a matrix version of discrepancy, by studying the so-call $\gamma_2$-norm.  

Roughly speaking, we are able to show how to directly reduce the kernel discrepancy problem to the $\gamma_2$-norm, and the bound derived from Banaszcyk's Theorem~\cite{banaszczyk1998balancing}.  In particular, the positive definiteness of a kernel, allows us to define a specific gram matrix $G$ which has a real-valued decomposition, which matches the structure studied with the $\gamma_2$ norm.  
Hence, while our positive definite restriction is similar to the characteristic restriction studied for $\eps$-KDE coresets in many other settings~\cite{GBRSS12,bach2012equivalence} it uses a very different aspect of this property: the decomposability, not the embedding.

Finally, we show a lower bound, that there exist point sets $P$ in dimension $d$, such that any $\eps$-kernel coreset requires $\Omega(\sqrt{d}/\eps)$ points.  Specifying this to the $d=1/\eps^2$ case, proves a lower bound of $\Omega(1/\eps^2)$ for any case with $d \geq 1/\eps^2$.  
This applies to every shift-invariant kernel we considered, with a slightly weakened condition for the ball kernel.

\section{Upper Bound for KDE Coreset}
\label{upper}

Recall that our result focuses on the case of $d<\frac{1}{\eps^2}$.
We assume that $P$ is finite and of size $n$; however, as mentioned in the related work, for many settings, we can reduce this to a point set of size independent of $n$ (size $1/\eps^2$ or $d/\eps^2$, depending on the kernel).  Indeed these techniques may start with inputs as continuous distributions as long as we can draw random samples or run iterative algorithm.  

Consider a point set $P \subset \R^d$ as input, but as Section \ref{sec:specific} describes, it is possible to apply these arguments to other domains.  

To prove our $\eps$-kernel coreset upper bound we introduce two properties that the kernel must have.  
\begin{itemize}
	\item We say a kernel $K$ has \emph{$c_K$-bounded influence} if, for any $x \in \mathbb{R}^d$ and $\delta>0$, $\abs{K(x,y)} < \delta$ for all $y \notin x+[-(1/\delta)^{c_K},(1/\delta)^{c_K}]^d$ for some constant $c_K$.  By default we set $\delta = 1/n$. 
	If $c_K$ is an absolute constant we simply say $K$ is \emph{bounded influence}.  
	\item We say a kernel $K$ is \emph{$C_K$-Lipschitz} if, for any $x,y,z \in \mathbb{R}^d$, $\abs{K(x,z)-K(y,z)}<C_K\norm{x-y}$ for some $C_K$.  If $C_K$ is an absolute constant within the context of the problem, we often just say the kernel is \emph{Lipschitz}.  
\end{itemize}

Next define a lattice $R = \setdef{(\frac{i_1}{\sqrt{d}n},\frac{i_2}{\sqrt{d}n}, \dots, \frac{i_d}{\sqrt{d}n})}{i_j\text{ are integers}}$.
Also, denote, for each $p \in P$,  $S_p = p+R\cap [-n^{c_K},n^{c_K}]^d$ and $S = \cup_{p\in P}S_p$.

The following lemma explains that we only need to consider the evaluation at a finite set (specifically $S$) rather than the entire space while preserving the discrepancy asymptotically. 
The advantage of doing this is we can then use a matrix representation of the discrepancy formula.  

\begin{lemma}\label{grid}
	$\max_{x\in \mathbb{R}^d} \disc(P,\chi,x) \leq \max_{x\in S} \disc(P,\chi,x)+O(1)$
\end{lemma}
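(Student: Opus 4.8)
The plan is to prove the inequality pointwise and then take a supremum: for an arbitrary $x\in\R^d$ I would exhibit a point $x'\in S$ with $\disc(P,\chi,x)\le \disc(P,\chi,x')+O(1)$. I would split on whether $x$ lies near $P$ or not, using the two hypotheses on $K$ in complementary ways — $c_K$-bounded influence to dispose of the unbounded ``far'' region, and $C_K$-Lipschitzness to let the lattice $R$ absorb the ``near'' region. The lattice spacing $\frac{1}{\sqrt d\,n}$ is tuned precisely so that a cell has Euclidean diameter $\frac1n$, which is exactly what collapses the $n$ summands of $\disc$ into a constant.

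First I would handle the far case, where $x\notin p+[-n^{c_K},n^{c_K}]^d$ for every $p\in P$. Since this box is symmetric about the origin, that is equivalent to $p\notin x+[-n^{c_K},n^{c_K}]^d$ for every $p\in P$, so applying the $c_K$-bounded influence property at $x$ with $\delta=1/n$ gives $\abs{K(x,p)}<1/n$ for all $p\in P$, whence
\[
\disc(P,\chi,x)=\abs{\sum_{p\in P}\chi(p)K(x,p)}\le\sum_{p\in P}\abs{K(x,p)}<n\cdot\frac1n=1 .
\]
Since $S\ne\emptyset$ (indeed $p\in S_p$) and $\disc(P,\chi,\cdot)\ge 0$, any $x'\in S$ then certifies the claim for such $x$.

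In the remaining near case, $x\in p+[-n^{c_K},n^{c_K}]^d$ for some $p\in P$, I would round each coordinate of $x-p$ to a lattice value staying inside $[-n^{c_K},n^{c_K}]$, obtaining $x'\in p+(R\cap[-n^{c_K},n^{c_K}]^d)=S_p\subseteq S$ with $\norm{x-x'}\le\sqrt d\cdot\frac{1}{\sqrt d\,n}=\frac1n$. Then $C_K$-Lipschitzness gives $\abs{K(x,q)-K(x',q)}\le C_K/n$ for every $q\in P$, and the triangle inequality yields
\[
\disc(P,\chi,x)\le\disc(P,\chi,x')+\sum_{q\in P}\abs{K(x,q)-K(x',q)}\le\disc(P,\chi,x')+n\cdot\frac{C_K}{n}=\disc(P,\chi,x')+C_K .
\]
Combining the two cases, every $x\in\R^d$ satisfies $\disc(P,\chi,x)\le\max_{x'\in S}\disc(P,\chi,x')+\max\{1,C_K\}$, and taking the supremum over $x$ (with $C_K=O(1)$ in our setting) finishes the proof. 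The hard part is really just bookkeeping: the one spot needing care is the boundary of the box $p+[-n^{c_K},n^{c_K}]^d$, where the snapped lattice point must be rounded ``inward'' to guarantee it lies in $S_p$ — this at worst doubles the per-coordinate displacement and does not affect the $O(1)$ term.
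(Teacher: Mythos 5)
Your proof is correct and follows essentially the same route as the paper's: dispose of points far from $P$ via the bounded-influence property, and for nearby points round to the closest lattice point of $S$ and use the Lipschitz bound with the spacing $\frac{1}{\sqrt d\,n}$ tuned so the accumulated error over $n$ terms is $O(1)$. Your treatment of the boundary rounding is slightly more careful than the paper's (which simply takes the closest point of $S$), but the argument is the same.
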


\begin{proof}
	For any $x \in \mathbb{R}^d$, if $x \notin \cup_{p\in P} \left(p+[-n^{c_K},n^{c_K}]^d\right)$, that is $x$ is not within $n^{c_K}$ in all coordinates of some $p \in P$, then $K(p,x) \leq 1/n$ for all $p \in P$.  Hence we have
	\[
	\disc(P,\chi, x) =  \abs{\sum_{p\in P} \chi(p) K(p,x) } \leq O(1).
	\]
	Otherwise, pick $x_0\in S$ to be the closest point to $x$.
	We have 
	\begin{align*}
	\disc(P,\chi, x)
	& =
	\abs{\sum_{p\in P} \chi(p) K(p,x) } \\
	& =
	\abs{\sum_{p\in P} \chi(p) (K(p,x_0)+K(p,x)-K(p,x_0)) }\\
	& \leq 
	\abs{\sum_{p\in P} \chi(p) K(p,x_0) }+ \sum_{p \in P}\abs{K(p,x)-K(p,x_0)} \\
	& \leq 
	\disc(P,\chi, x_0)+ \sum_{p \in P}C_K\cdot\norm{x-x_0} \\
	& \leq 
	\disc(P,\chi, x_0)+ n\cdot C_K \cdot\sqrt{d(\frac{1}{\sqrt{d}n})^2} \\
	& =
	\disc(P,\chi, x_0)+O(1). \qedhere
	\end{align*}	
\end{proof}

Now we discuss the matrix view of discrepancy, known results, and then how to map the discretized kernel discrepancy problem into this setting.  Consider any $s\times t$ matrix $A$, and define
\[
\disc(A) = \min_{x\in \{-1,+1\}^t}\norm{Ax}_{\infty}.
\]
Following Matousek \etal~\cite{matousek2014factorization} we define $\gamma_2(A) = \min_{BC=A} r(B)\cdot c(C)$ where $r(B)$ is largest Euclidean norm of row vectors of $B$ and $c(C)$ is largest Euclidean norm of column vectors of $C$.
There is an equivalent~\cite{matousek2014factorization} geometric interpretation of $\gamma_2$.
Let $\Eu{E}_A$ be the set of ellipsoids in $\R^s$ that contain all column vectors of $A$.
Then, $\gamma_2(A) = \min_{E \in\Eu{E}_A} \max_{x\in E} \norm{x}_\infty$.
It is easy to see that $\gamma_2$ is a norm and $\gamma_2(A) \leq \gamma_2(A')$ when the columns of $A$ are subset of the columns of $A'$.  We will apply these properties shortly.  

A recent result by Matousek \etal~\cite{matousek2014factorization} shows the following property about connecting discrepancy to $\gamma_2$, which was recently made constructive in polynomial time~\cite{bansal2017gram}.  

\begin{lemma}[Matousek \etal~\cite{matousek2014factorization}]
	\label{lem:gamma2}
	For an $s\times t$ matrix $A$, 
	$
	\disc(A) \leq O(\sqrt{\log s})\cdot \gamma_2(A).  
	$
\end{lemma}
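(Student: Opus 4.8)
The plan is to derive the lemma directly from Banaszczyk's vector-balancing theorem, using the factorization (equivalently, ellipsoidal) description of $\gamma_2$ recalled just above the statement. First I would fix a factorization $A = BC$ attaining $\gamma_2(A) = r(B)\cdot c(C)$; if $A=0$ the claim is trivial, so assume $c(C)>0$, and rescale $B \mapsto c(C)\,B$ and $C \mapsto C/c(C)$, after which every column of the new $C$ has Euclidean norm at most $1$ and every row of the new $B$ has Euclidean norm at most $\gamma_2(A)$. Write $r$ for the common inner dimension, $u_1,\dots,u_t \in \R^r$ for the columns of $C$, and $b_1,\dots,b_s \in \R^r$ for the rows of $B$. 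The point of this bookkeeping is that choosing $x \in \{-1,+1\}^t$ with $\norm{Ax}_\infty$ small is exactly the same as choosing signs $\eps_i$ so that $\sum_i \eps_i u_i$ lies inside the slab body $\Eu{K}_\lambda = \setdef{y \in \R^r}{\norm{By}_\infty \leq \lambda}$, since $BCx = Ax$.

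Second I would verify that $\Eu{K}_\lambda$ has Gaussian measure at least $1/2$ for a suitable $\lambda = \Theta\big(\gamma_2(A)\sqrt{\log s}\big)$. For a standard Gaussian $g$ in $\R^r$, the $j$-th coordinate of $Bg$ is $\langle b_j, g\rangle$, a centered Gaussian of variance $\norm{b_j}_2^2 \leq \gamma_2(A)^2$, so the standard Gaussian tail bound gives $\prob{\abs{\langle b_j,g\rangle} > \lambda} \leq 2\exp(-\lambda^2/(2\gamma_2(A)^2))$; a union bound over the $s$ rows yields $\prob{Bg \notin \Eu{K}_\lambda} \leq 2s\exp(-\lambda^2/(2\gamma_2(A)^2))$, which is at most $1/2$ once $\lambda \geq \gamma_2(A)\sqrt{2\ln(4s)}$. (Only $\log s$ enters here, which is exactly why the bound in the lemma is $\sqrt{\log s}$ and not $\sqrt{\log(st)}$.)

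Third I would invoke Banaszczyk's theorem: there is a universal constant $c_0$ such that for any vectors $u_1,\dots,u_t \in \R^r$ of Euclidean norm at most $1$ and any convex body $\Eu{K} \subseteq \R^r$ of Gaussian measure at least $1/2$, there exist signs $\eps_1,\dots,\eps_t \in \{-1,+1\}$ with $\sum_i \eps_i u_i \in c_0 \Eu{K}$. Applying this with our $u_i$ and $\Eu{K}_\lambda$ produces $x = (\eps_1,\dots,\eps_t) \in \{-1,+1\}^t$ with $\norm{B(\sum_i \eps_i u_i)}_\infty \leq c_0 \lambda$, and since $B(\sum_i \eps_i u_i) = BCx = Ax$ we get $\disc(A) \leq \norm{Ax}_\infty \leq c_0\lambda = O(\sqrt{\log s})\cdot \gamma_2(A)$. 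For the polynomial-time constructive version one replaces Banaszczyk's existential statement by its algorithmic counterpart (the Gram--Schmidt-walk-style rounding of \cite{bansal2017gram}), which outputs such signs efficiently; the surrounding argument is identical.

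The main obstacle is not the surrounding calculation — the rescaling, the one-dimensional Gaussian tail estimate, and the union bound are all routine — but rather that the single substantive ingredient, Banaszczyk's theorem, is itself a deep result (a convexity/measure-concentration statement on $\R^r$) which we would cite rather than reprove. The only genuine care required on our side is to choose the convex body $\Eu{K}_\lambda$ so that the $\ell_\infty$-radius we actually want to control and the Gaussian-measure hypothesis are linked through the very factorization $A=BC$ that defines $\gamma_2(A)$.
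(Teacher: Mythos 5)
The paper does not give its own proof of this lemma; it is cited directly from Matou\v{s}ek, Nikolov, and Talwar~\cite{matousek2014factorization}, and the surrounding text attributes the result to an application of Banaszczyk's balancing theorem~\cite{banaszczyk1998balancing}. Your reconstruction is correct and is precisely that argument: rescale the optimal factorization $A = BC$ so the columns of $C$ are unit vectors and the rows of $B$ have norm at most $\gamma_2(A)$, observe that the symmetric slab body $\Eu{K}_\lambda = \setdef{y}{\norm{By}_\infty \leq \lambda}$ acquires Gaussian measure $\geq 1/2$ at $\lambda = \Theta(\gamma_2(A)\sqrt{\log s})$ by a one-dimensional tail bound plus a union bound over the $s$ rows, and then invoke Banaszczyk's theorem to place $\sum_i \eps_i u_i$ inside a constant dilate of $\Eu{K}_\lambda$, which translates back to $\norm{Ax}_\infty = O(\sqrt{\log s})\,\gamma_2(A)$. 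Your closing remark about the constructive variant via~\cite{bansal2017gram} also matches how the paper uses the lemma. No gaps.
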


Let the size of $S$ be $m = O(n^{O(d)})$, and define an $m \times n$ matrix $G$ so its rows are indexed by $x \in S$ and columns indexed by $p \in P$, and $G_{x,p} = K(p,x)$.  By examination, $\disc(G) = \min_\chi \max_{x \in S} \disc(P,\chi,x)$.  

\begin{lemma} \label{lem:gamma2=1}
	$\gamma_2(G) = 1$.  
\end{lemma}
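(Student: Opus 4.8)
The plan is to use the positive definiteness of $K$ to factor $G$ through the feature map, matching exactly the product structure that $\gamma_2$ measures, and then to read off the matching lower bound from a single entry of $G$. First I would note that the lattice $R$ contains the origin, so $p \in S_p \subseteq S$ for every $p \in P$; hence $G$ is precisely the column-submatrix of the Gram matrix $\tilde{G}$ of the point set $S$ (with $\tilde{G}_{x,y} = K(x,y)$ for $x,y \in S$) obtained by keeping only the columns indexed by $P$. Since deleting columns cannot increase $\gamma_2$, for the upper bound it suffices to show $\gamma_2(\tilde{G}) \leq 1$.

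For that I would invoke the fact recalled in Section~\ref{sec:background} that a positive definite matrix decomposes as $\tilde{G} = H^T H$ for a real matrix $H$; writing $h_x$ for the column of $H$ indexed by $x \in S$, we get $\langle h_x, h_y\rangle = K(x,y)$ and, crucially, $\norm{h_x}^2 = K(x,x) = 1$ for every $x \in S$. Taking $B = H^T$ and $C = H$ gives $BC = \tilde{G}$ with every row of $B$ and every column of $C$ of Euclidean norm exactly $1$, so $\gamma_2(\tilde{G}) \leq r(B)\cdot c(C) = 1$, whence $\gamma_2(G) \leq 1$. (Equivalently, one can exhibit the factorization directly on $G$: let $B$ have rows $h_x^T$ for $x \in S$ and $C$ have columns $h_p$ for $p \in P$, and verify $BC = G$ from the symmetry of $K$.) For the reverse inequality I would use the ellipsoid characterization of $\gamma_2$: the column of $G$ indexed by any $p \in P$ has its entry at row $x = p$ equal to $G_{p,p} = K(p,p) = 1$, so every ellipsoid in $\Eu{E}_G$ must contain a point of $\ell_\infty$-norm at least $1$, giving $\gamma_2(G) \geq 1$. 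Combining the two bounds yields $\gamma_2(G) = 1$.

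There is no real difficulty beyond bookkeeping; the two points that need care are (i) that we may work with the \emph{finite} Gram matrix on $S$ (or on $S \cup P$) rather than with the possibly infinite-dimensional RKHS — this is legitimate because $G$ only ever involves $K$ evaluated on this finite set, and the positive-definite Gram matrix on a finite set admits a real factorization — and (ii) keeping straight that in $\gamma_2(A) = \min_{BC=A} r(B)\,c(C)$ the quantity $r(\cdot)$ ranges over rows and $c(\cdot)$ over columns, so that $B$ must carry the $S$-points and $C$ the $P$-points. Finally, I note that only the upper bound $\gamma_2(G) \le 1$ is actually needed in the sequel (through Lemma~\ref{lem:gamma2}), so even if one preferred to suppress the observation that $P \subseteq S$, nothing downstream would be affected.
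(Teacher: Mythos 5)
Your proof is correct and follows essentially the same route as the paper: both pass to the full Gram matrix on $S$ (the paper calls it $G'$, you call it $\tilde{G}$), use the positive-definite factorization $H^T H$ with unit-norm columns to bound $\gamma_2$ from above, invoke monotonicity under taking column-submatrices, and use the ellipsoid characterization together with the diagonal entry $K(p,p)=1$ for the lower bound. Your added remarks (the direct factorization of $G$ itself, and the observation that only the upper bound is used downstream) are accurate but do not change the argument.
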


\begin{proof}
	Denote $G'$ be a $m\times m$ matrix with both row and column indexed $x,y \in S$ such that $G'_{x,y} = K(x,y)$.
	Note that columns of $G$ are a subset of columns of $G'$ since $P \subset S$.
	Since $K$ is a positive definite kernel, it means that $G'$ can be expressed as $H^TH$ for some matrix $H$.
	Now denote $v_x$ as the $x$th column of $H$ for all $x \in S$.
	We have $v_x^T v_x = G'_{x,x} = 1$ which means the norm $\|v_x\| = \sqrt{v_x^T v_x} = 1$ for each column $v_x \in H$.  Hence the same holds for rows in $H^T$, and this bounds $\gamma_2(G') \leq 1$.
	Then since $\gamma_2(G) \leq \gamma_2(G')$ we have $\gamma_2(G) \leq 1$.
	
	On the other hand, one of the coordinates in a column of $G$ is $1$.
	By the geometric definition, any ellipsoid containing columns of $G$ has a point inside of it such that one of its coordinates is $1$.
	Hence $\gamma_2(G) \geq 1$.	
\end{proof}

Combining all above lemmas, for any $P \subset \R^d$ of size $n$ 
\begin{align*}
\disc(n, \Eu{K}) 
& \leq 
\max_{P : |P| =n} \min_\chi \max_{x \in S} \disc(P,\chi, x) + O(1) & \text{ Lemma \ref{grid}}
\\ &= 
\max_{P : |P| =n} \disc(G) + O(1) & \text{Definition of $G$}
\\ & \leq
O(\sqrt{d \log n} \cdot \gamma_2(G)) & \text{ Lemma \ref{lem:gamma2}~\cite{matousek2014factorization}}
\\ &=
O(\sqrt{d \log n}).  & \text{ Lemma \ref{lem:gamma2=1}}  
\end{align*}

\begin{theorem}
	\label{thm:disc-up}
	Let $K: \mathbb{R}^d \times \mathbb{R}^d \rightarrow \mathbb{R}$ be a bounded influence, Lipschitz, positive definite kernel.
	For any integer $n$, $\disc(n,\mathcal{K}_d) = O(\sqrt{d\log n})$.  
\end{theorem}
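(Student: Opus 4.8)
The plan is to assemble the three lemmas already in hand; the theorem is essentially a bookkeeping consequence of Lemma~\ref{grid}, Lemma~\ref{lem:gamma2}, and Lemma~\ref{lem:gamma2=1}, and the only real content is verifying that the losses multiply out to $O(\sqrt{d\log n})$. First I would fix an arbitrary $P\subset\R^d$ with $|P|=n$ and an arbitrary coloring $\chi$, and apply Lemma~\ref{grid} to replace the supremum over all of $\R^d$ by a maximum over the finite witness set $S$ at the cost of an additive $O(1)$; taking $\min_\chi$ and then $\max_{|P|=n}$ preserves this, so it suffices to bound $\max_{|P|=n}\min_\chi\max_{x\in S}\disc(P,\chi,x)$. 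By the definition of the $m\times n$ matrix $G$ with $G_{x,p}=K(p,x)$, this quantity equals $\max_{|P|=n}\disc(G)$.

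Next I would invoke Lemma~\ref{lem:gamma2} (Matousek \etal) for the matrix $G$: $\disc(G)\le O(\sqrt{\log m})\cdot\gamma_2(G)$, where $m=|S|$. Since $S$ is contained in the union over the $n$ points of $P$ of the lattice $R$ (spacing $1/(\sqrt d\, n)$) restricted to a box of side $2n^{c_K}$, a direct count gives $m = n^{O(d)}$, hence $\sqrt{\log m}=O(\sqrt{d\log n})$. Then Lemma~\ref{lem:gamma2=1} supplies $\gamma_2(G)=1$, which is where the positive-definiteness hypothesis is actually used: the $S\times S$ Gram matrix $G'$ factors as $H^TH$ with all columns (and rows of $H^T$) of unit Euclidean norm because $K(x,x)=1$, so $\gamma_2(G')\le 1$; monotonicity of $\gamma_2$ under passing to a subset of columns, together with $P\subset S$, yields $\gamma_2(G)\le\gamma_2(G')\le 1$, and the reverse bound $\gamma_2(G)\ge 1$ is immediate since each column of $G$ has an entry equal to $1$. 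Chaining these bounds gives $\disc(n,\mathcal{K}_d)\le O(\sqrt{d\log n})\cdot 1 + O(1) = O(\sqrt{d\log n})$.

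The step I expect to be the crux is the bound $\gamma_2(G)=1$, i.e.\ routing the kernel discrepancy through the $\gamma_2$-norm rather than a VC-type argument: this is exactly what lets the result apply to kernels such as sinc whose super-level sets have unbounded VC dimension, and it relies only on the algebraic decomposability $G'=H^TH$ with unit columns, not on any embedding or characteristic property. The Lipschitz and bounded-influence hypotheses play only a supporting role — they are what make the discretization of Lemma~\ref{grid} valid and what keep $m=n^{O(d)}$ so the $\sqrt{\log m}$ factor is only $O(\sqrt{d\log n})$ — so I would treat those as routine and spend the care on the $\gamma_2$ computation.
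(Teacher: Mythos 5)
Your proposal is correct and follows exactly the paper's own argument: discretize via Lemma~\ref{grid} at an additive $O(1)$ cost, identify the resulting quantity with $\disc(G)$, apply Lemma~\ref{lem:gamma2} with $m=|S|=n^{O(d)}$ to get the $O(\sqrt{d\log n})$ factor, and close with $\gamma_2(G)=1$ from Lemma~\ref{lem:gamma2=1}. You also correctly locate the crux (positive definiteness yielding the unit-norm factorization $G'=H^TH$ and hence $\gamma_2(G)\le 1$), so there is nothing to add.
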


\begin{corollary}
	Let $K: \mathbb{R}^d \times \mathbb{R}^d \rightarrow \mathbb{R}$ be a bounded influence, Lipschitz, positive definite kernel.
	For any set $P\subset \mathbb{R}^d$, there is a subset $Q\subset P$ of size $O(\frac{1}{\eps}\sqrt{d\log \frac{1}{\eps}})$ such that 
	\[
	\max_{x\in\mathbb{R}^d}\abs{\kde_P(x) - \kde_Q(x)} < \eps.  
	\]
\end{corollary}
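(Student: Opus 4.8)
The plan is to derive the corollary from Theorem~\ref{thm:disc-up} via the classical discrepancy-based recursive halving reduction sketched in Section~\ref{sec:background} (used for $\eps$-KDE coresets in~\cite{phillips2013varepsilon}). I take $P$ finite --- as noted at the start of Section~\ref{upper} this is without loss of generality after an initial size reduction --- and I work in the stated regime $d < 1/\eps^2$. The basic building block is a single halving step: given a finite $P' \subset \R^d$ with $|P'| = n'$, Theorem~\ref{thm:disc-up}, made algorithmic by~\cite{bansal2017gram}, produces a coloring $\chi : P' \to \{-1,+1\}$ with $\max_{x \in \R^d}\disc(P',\chi,x) \le c\sqrt{d\log n'}$, and --- at the cost of only a constant factor, exactly as for ordinary range spaces --- one may also force $\chi$ to be balanced, $|\{p : \chi(p)=+1\}| = \lfloor n'/2\rfloor$. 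Writing $P'_{+} = \{p \in P' : \chi(p) = +1\}$ and using $\sum_{p\in P'_{+}}K(x,p) = \tfrac{1}{2}\big(\sum_{p\in P'}K(x,p) + \sum_{p\in P'}\chi(p)K(x,p)\big)$ together with $|P'_{+}| = n'/2$, we obtain $\|\kde_{P'_{+}} - \kde_{P'}\|_\infty \le \tfrac{1}{n'}\max_{x}\disc(P',\chi,x) \le \tfrac{c}{n'}\sqrt{d\log n'}$.

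Next I would iterate the halving, starting from $P^{(0)} = P$ of size $n$, to get a nested chain $P^{(0)} \supset P^{(1)} \supset \cdots \supset P^{(k)} =: Q$ with $|P^{(i)}| = n/2^i$ (rounding costs only constants), so that by the triangle inequality
\[
\|\kde_P - \kde_Q\|_\infty \;\le\; \sum_{i=0}^{k-1}\frac{c}{n/2^i}\sqrt{d\log(n/2^i)}.
\]
Setting $m = |Q| = n/2^k$ and substituting $n/2^i = 2^{k-i}m$, the right-hand side equals $\tfrac{c\sqrt{d}}{m}\sum_{j=1}^{k} 2^{-j}\sqrt{\log m + j\log 2}$, which via $\sqrt{a+b} \le \sqrt{a} + \sqrt{b}$ and $\sum_{j\ge 1}\sqrt{j}\,2^{-j} = O(1)$ is $O\!\big(\sqrt{d\log m}/m\big)$. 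In particular the bound is dominated by its final, small-$n$ stages; the early stages, where $n/2^i$ is large, contribute essentially nothing.

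It then remains to stop the recursion at the first $Q$ in the chain with $c'\sqrt{d\log|Q|}/|Q| < \eps$. Plugging the ansatz $|Q| = \Theta\big(\tfrac{1}{\eps}\sqrt{d\log\tfrac{1}{\eps}}\big)$ into $\log|Q|$ and using $d < 1/\eps^2$ (so $\log d = O(\log\tfrac{1}{\eps})$) makes this a routine back-substitution, giving $|Q| = O\big(\tfrac{1}{\eps}\sqrt{d\log\tfrac{1}{\eps}}\big)$; since each step keeps a subset, $Q \subset P$, as required. The one genuinely delicate point is precisely this error telescoping: it produces a $\sqrt{\log(1/\eps)}$ factor rather than a $\sqrt{\log n}$ factor only because the discrepancy bound of Theorem~\ref{thm:disc-up} decays as the point set shrinks, so the geometric series is front-loaded onto the last, small stages. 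The secondary subtlety --- ensuring each halving coloring is (nearly) balanced so that $\kde_{P'_{+}}$ is a true uniform average and not a rescaled one --- is handled exactly as in the classical range-space setting, and for this I would simply invoke~\cite{phillips2013varepsilon}.
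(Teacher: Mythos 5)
Your proposal is correct and follows essentially the same route as the paper: invoke Theorem~\ref{thm:disc-up}, force the coloring to be balanced at the cost of an additive $O(\sqrt{d\log n})$ in discrepancy, and apply the standard recursive halving with the telescoping error sum dominated by the final stages. The only difference is one of emphasis --- the paper spells out the balancing step (via an all-ones row and the triangle inequality for $\gamma_2$) and cites the halving analysis, whereas you spell out the halving telescoping and cite the balancing --- but the argument is the same.
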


\begin{proof}
	In order to apply the standard halving technique~\cite{CM96,Phi08}, we need to make sure the coloring has the property that half of point assigned $+1$ and the other half of them assigned $-1$.  We adapt a standard idea from combinatorial discrepancy~\cite{Mat10}.  
	
	This can be done by adding an all-one row to the discrepancy matrix $G$.
	It guarantees that the difference of the number of $+1$ and $-1$ is $O(\sqrt{d\log n})$ since $\gamma_2$ is a norm, and therefore we can apply the triangle inequality.
	Namely,
	\[
	\gamma_2\left(
	\begin{bmatrix}
	\mathbbm{1}_{1\times n} \\
	G
	\end{bmatrix}
	\right)
	\leq 
	\gamma_2\left(
	\begin{bmatrix}
	O_{1\times n} \\
	G
	\end{bmatrix}
	\right)
	+
	\gamma_2\left(
	\begin{bmatrix}
	\mathbbm{1}_{1 \times n} \\
	O_{m \times n}
	\end{bmatrix}
	\right)
	\]
	where $\mathbbm{1}$ is all-one matrix and $O$ is zero matrix.
	Let $P_+ = \setdef{p\in P}{\chi(p) = +1}$ and $P_- = \setdef{p\in P}{\chi(p)  =-1}$.
	Suppose there are more $+1$s than $-1$s.
	Choose $O(\sqrt{d\log n})$ points assigned $+1$ arbitrarily and flip them to $-1$ such that it makes the difference zero.
	$P_+'$ and $P_-'$ are defined in the same way as $P_+$ and $P_-$, after flipping some values.
	For any $x \in \mathbb{R}^d$,
	\begin{align*}
	&
	\abs{\sum_{p\in P_+'}K(x,p)-\sum_{p\in P_-'}K(x,p)} \\
	& \leq 
	\abs{\sum_{p\in P_+}K(x,p)-\sum_{p\in P_-}K(x,p)}+\abs{\sum_{p\in P_+'\backslash P_+}K(x,p)} + \abs{\sum_{p\in P_-\backslash P_-'}K(x,p)}  \\
	& = 
	O(\sqrt{d\log n}).  
	\end{align*}
	Now, we can apply the standard halving technique to achieve 
	\[
	\max_{x\in\mathbb{R}^d}\abs{\kde_P(x) - \kde_Q(x)} < \eps. \qedhere
	\]	
\end{proof}


\paragraph{Implementation. }
Note that we do not need to decompose the entire matrix $G$. 
Instead, we just need a set of vectors $V = \setdef{v_p}{p\in P}$ such that the inner product $\langle v_{p_1}, v_{p_2} \rangle = K(p_1,p_2)$ as input to the algorithm in~\cite{bansal2017gram}.
This set $V$ can be computed in $\poly(n, d) = \poly(n)$ time assuming $d < n$.
Using the standard Merge-Reduce framework~\cite{Phi08}, the coreset with desired size can be constructed in $O(n \poly(1/\eps))$ time.

\section{Lower Bound for KDE Coreset}\label{lower}

In this section, we add two new conditions on our kernel; both of these are common properties of kernels.    

\begin{itemize}
	\item 
	A kernel $K$ is \emph{rotation- and shift-invariant} if there exists a function $f$ such that $K(x,y) = f(\|x-y\|^2)$.  
	
	\item
	A rotation- and shift-invariant kernel is \emph{somewhere $C_f$-steep} if there exist a constant $C_f > 0$, and values $z_f > r_f > 0$ such that $f(z_1) -f(z_2) > C_f \cdot(z_2 - z_1)$ for all $z_1\in (z_f - r_f , z_f )$ and $z_2 \in (z_f , z_f + r_f )$.  When $C_f$ is an absolute constant, we often just say the kernel is \emph{somewhere steep}.  
\end{itemize}


Phillips \cite{phillips2013varepsilon} constructed an example of $P$ of size $1/\eps$ where each point in $P$ is far away from all others.
Therefore, if one of them is not picked for a KDE coreset $Q$, the evaluation of $\kde_Q$ at that point has large error.  
We divide $n=\frac{\sqrt{d}}{\eps}$ points into $n/d$ groups where each group has $d$ points that form a simplex, and each group is far away from all other groups.  
It means that there is a group producing $\Omega(1/\sqrt{d})$ error when considered alone, and then, since we have $n/d$ groups, the final error would be $\Omega(\frac{1/\sqrt{d}}{n/d})=\Omega(\eps)$.

\begin{theorem}
	\label{thm:core-lb}
	Suppose $\eps>0$.
	Consider a rotation- and shift-invariant, somewhere steep, bounded influence kernel $K$.  
	Assume $\frac{1}{\eps^2} \geq d\geq \frac{9z_f^2}{r_f^2}$, where $z_f$ and $r_f$ are absolute constants that depend on $K$ and are defined as they pertain to the somewhere steep criteria.  
	There is a set of $P\in \mathbb{R}^d$ such that, for any subset $Q$ of size $k\leq\frac{\sqrt{d}}{2\eps}$, there is a point $x \in \mathbb{R}^d$ such that $\abs{\kde_P(x)-\kde_Q(x)}> \eps$.
\end{theorem}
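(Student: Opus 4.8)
The plan is to build $P$ as a disjoint union of roughly $\sqrt d/(d\eps)$ \emph{clusters}, each cluster a regular $d$-simplex scaled so that all of its pairwise squared distances equal a value $L$ I will fix below, with the clusters placed pairwise so far apart (invoke bounded influence with a tiny threshold $\delta'$) that every kernel value between points of different clusters is below $\delta'$; then all cluster--cluster cross terms in any $\kde$ are negligible. Fix any $Q\subseteq P$ with $k:=|Q|\le \sqrt d/(2\eps)\le n/2$, where $n:=|P|\approx\sqrt d/\eps$. Since $\sum_i|Q\cap G_i|=k\le n/2$ over the $\approx n/d$ clusters $G_i$, some cluster $G$ has $t:=|Q\cap G|\le d/2$, so $Q$ omits $m:=d-t\ge d/2$ of $G$'s vertices; let $M,T$ be the omitted and retained vertex sets.

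The core step is to produce a probe $x^\star$ near $G$ at which $\kde_P$ and $\kde_Q$ disagree by more than $\eps$. I would pick $L$ and a point $x^\star$ that is equidistant at squared distance $z_f-\gamma/2$ from every vertex of $M$ and equidistant at squared distance $z_f+\gamma/2$ from every vertex of $T$; such points form a two--parameter affine family (the intersection of the locus equidistant from $M$ with the locus equidistant from $T$), and optimizing over that family together with $L$ shows the largest attainable ``gap'' is $\gamma=\Theta\!\big(z_f\sqrt{d/(mt)}\big)$, also capped by the window size $2r_f$. The hypothesis $d\ge 9z_f^2/r_f^2$ ensures we can take $\gamma>0$ with $z_f\pm\gamma/2$ both strictly inside $(z_f-r_f,z_f+r_f)$, so somewhere-$C_f$-steepness applies: with $a:=K(x^\star,v)=f(z_f-\gamma/2)$ for $v\in M$ and $b:=K(x^\star,v)=f(z_f+\gamma/2)$ for $v\in T$ we get $a-b>C_f\gamma$. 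Ignoring the other clusters,
\[
\kde_P(x^\star)-\kde_Q(x^\star)=\tfrac1n\big(m\,a+t\,b\big)-\tfrac1k\,t\,b\;\pm\;2\delta'=\tfrac mn(a-b)+b\big(\tfrac dn-\tfrac tk\big)\;\pm\;2\delta'.
\]

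The term $b(\tfrac dn-\tfrac tk)$ records the mismatch between the $1/|Q|$ and $1/n$ normalizations; I would kill it with two auxiliary probes: points equidistant from \emph{all} $d$ vertices of $G$ at squared distances $z_f-r_f/2$ and $z_f+r_f/2$ (take the circumcenter of $G$ and move the right amount orthogonally to $\mathrm{aff}(G)$; feasible once $L$ is fixed, since $r_f<z_f$). At these the discrepancy equals $f(z_f\mp r_f/2)\,(\tfrac dn-\tfrac tk)\pm 2\delta'$, so their difference is $\big[f(z_f-r_f/2)-f(z_f+r_f/2)\big](\tfrac dn-\tfrac tk)\pm 4\delta'$, with bracket $>C_f r_f$ by steepness. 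Hence either $|\tfrac dn-\tfrac tk|$ is large enough that one auxiliary probe already exceeds $\eps$, or it is tiny and the displayed identity gives $|\kde_P(x^\star)-\kde_Q(x^\star)|\ge\tfrac mn(a-b)-(\text{tiny})>\tfrac mn C_f\gamma-(\text{tiny})$. Plugging $m\ge d/2$, $\gamma=\Theta(z_f\sqrt{d/(mt)})$, and $\sqrt d/n=\Theta(\eps)$ gives $\tfrac mn C_f\gamma=\Omega(C_f z_f)\cdot\eps$; in the complementary regime where $\gamma$ saturates at $\Theta(r_f)$ (which forces $t=O(1)$, hence $m\ge d/2$ dominates) one gets $\Omega(C_f r_f\sqrt d)\cdot\eps\gg\eps$. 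So some probe beats $\eps$.

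I expect the geometric estimate in the core step to be the main obstacle: bounding, and attaining up to constants, how large a ``gap'' of squared distances a single point can create between the omitted and the retained vertices of a high--dimensional regular simplex while all those squared distances stay inside the narrow steep window $(z_f-r_f,z_f+r_f)$. This is exactly where the $\sqrt d$ (and not a larger factor) enters: a bounded--diameter regular $d$-simplex is so spread out that a probe separating $\Theta(d)$ of its vertices into the two halves of the window can do so only by $O(1/\sqrt d)$ in squared distance, so each omitted vertex contributes only about $C_f z_f/(n\sqrt d)$ to the discrepancy and the $\Theta(d)$ omitted vertices of the bad cluster sum to $\Theta(\eps)$. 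Tuning the constants so the final discrepancy clears $\eps$ (rather than a kernel--dependent multiple of it) is absorbed into the constant $9$ in $d\ge 9z_f^2/r_f^2$ and the factor in the size bound, together with a routine but careful tracking of the window inequalities; the sub-case where $Q$ omits almost an entire cluster is easier and leaves room to spare.
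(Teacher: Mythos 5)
Your construction and the engine of the argument coincide with the paper's: the same point set (about $\sqrt{d}/(d\eps)$ well-separated copies of a regular $d$-vertex simplex with pairwise squared distances $z_f$), the same pigeonhole to find a cluster with at most $d/2$ retained vertices, and the same key geometric fact --- a probe equidistant from the omitted vertices and equidistant from the retained ones whose two squared distances differ by $\Theta(z_f/\sqrt{d})$ while both stay inside the steep window (your $\gamma=\Theta(z_f\sqrt{d/(mt)})$ specializes, for $m\approx t\approx d/2$, to exactly the paper's $l_2^2-l_1^2=2z_f/\sqrt{d}+2z_f/d$), after which somewhere-steepness and summing over the $\Theta(d)$ omitted vertices give $\Omega(\eps)$. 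The one genuinely different step is how the normalization mismatch between $\tfrac1n$ and $\tfrac1{|Q|}$ (the term $b(\tfrac dn-\tfrac tk)$) is neutralized. The paper uses a symmetric pair of probes $p^*_\pm$ --- reflecting the probe through the simplex centroid swaps the roles of the two distances $l_1,l_2$ --- and a case split on whether the total weight in the bad cluster is above or below $d/n$, so the mismatch term always has a favorable sign; this also handles weighted coresets for free. Your two auxiliary circumcenter-axis probes at squared distances $z_f\mp r_f/2$ also work, but note the quantitative wrinkle: demanding that an auxiliary probe itself exceed $\eps$ forces the residual in the main probe to be as large as $\Theta(\eps/(C_fr_f))$, which need not be dominated by the main term $\Theta(C_fz_f\eps)$; you need to balance the two thresholds (e.g.\ declare $|\tfrac dn-\tfrac tk|$ ``large'' already at $\Theta(C_fz_f\eps)$) and then rescale $\eps$ by the resulting kernel-dependent constant. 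That rescaling is also implicitly needed in the paper (its final bound is $C_fz_f\eps=\Omega(\eps)$, not literally $>\eps$), so your argument is at the same level of rigor, just with one more constant to track.
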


\begin{proof}
	Let $n=\sqrt{d}/\eps$.
	We allow weighted coresets of $Q$; that is, for each $q\in Q$, there is a real number $\beta_q$ such that $\kde_Q(x) = \sum_{q\in Q} \beta_q K(x,q)$.

	\begin{figure}
		\begin{center}
			\includegraphics[width=\textwidth]{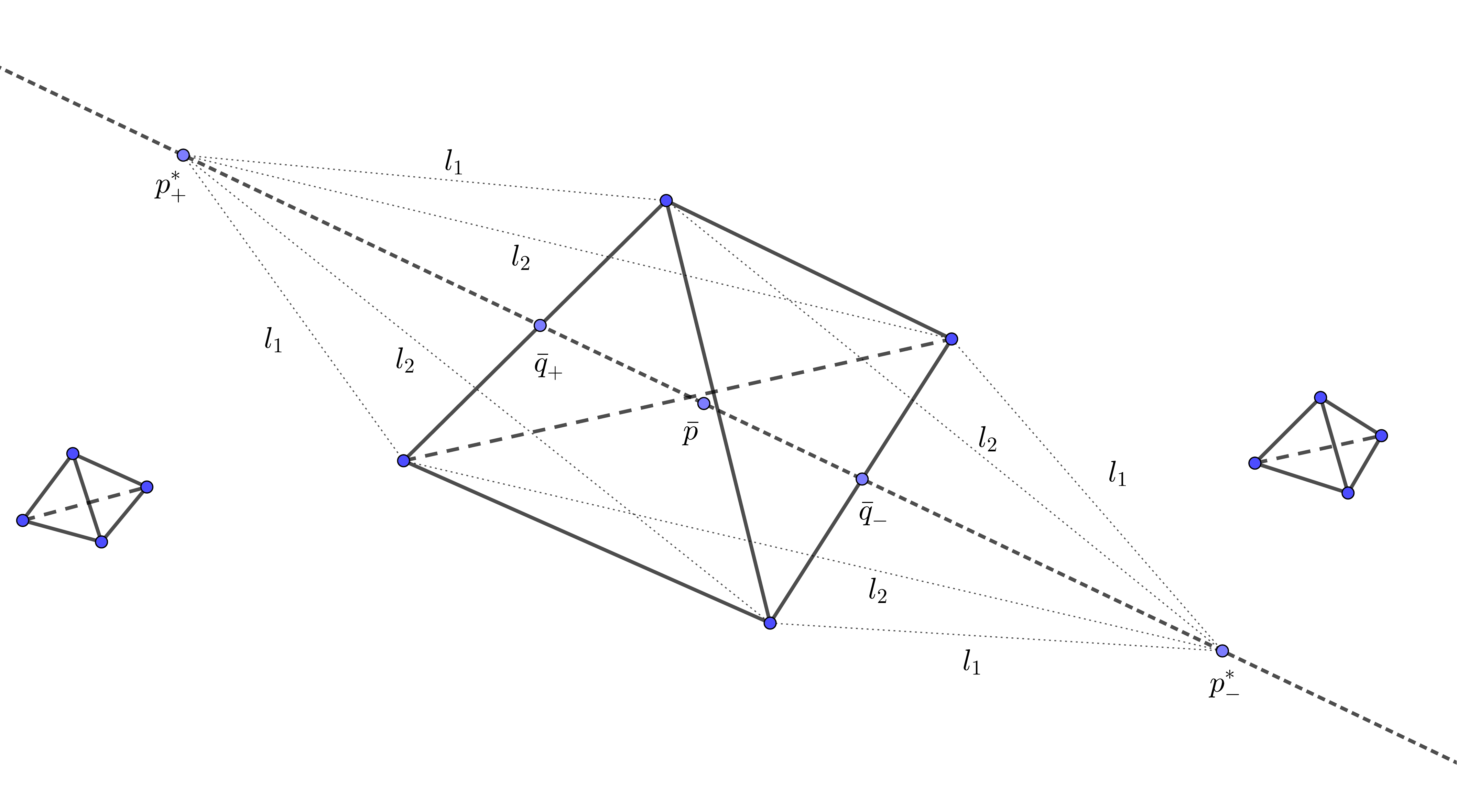}
		\end{center}
		\vspace{-.25in}
		\caption{Illustration of the lower bound construction.\label{fig:LB}}
	\end{figure}

	Let $k \leq n/2$ be the size of the potential coreset we consider.
	Construct $P$ with size of $n$ in $\mathbb{R}^d$ as follow.
	Let $\{e_i\}_{i=1}^d$ is the standard basis and $L$ is a very large number.
	Set $P_j = \setdef{p_{i,j} = \sqrt{\frac{z_f}{2}} e_i+jLe_1}{i=1,2,\dots,d}$ for all $j = 1,2,\dots,\frac{n}{d}$.
	Define $P= \cup_{j=1}^{n/d} P_j$.
	Namely, we divide $n$ points into $\frac{n}{d}$ groups and each group has $d$ points which forms a $d$-simplex.
	Also, the groups are sufficiently far away from each other.
	Suppose $Q = \cup_{j=1}^{n/d}\setdef{p_{i_a,j}}{a = 1,2,\dots,k_j}$ where $k_j$ is the number of points in $Q$ at group $j$.
	Denote $Q_j = \setdef{p_{i_a,j}}{a = 1,2,\dots,k_j}$.
	That is, $Q = \cup_{j=1}^{n/d} Q_j$ and $\abs{Q_j}=k_j\leq d$ with $\sum_{j=1}^{n/d} k_j=\abs{Q}=k$.

	Since $\sum_{j=1}^{n/d} \abs{Q_j}=k \leq n/2$, at least one $j$ must satisfy $k_j\leq \frac{d}{2}$.  
	Denote $j'$ to be that $j$.
	We can assume $k_{j'} = d/2$, otherwise, pick enough points arbitrarily from $P_j \setminus Q_{j'}$ and place them in $Q_{j'}$ to make $|Q_{j'}| = k_{j'} = d/2$, but set the corresponding weight to be $0$.  
	Denote $\bar{p} = \frac{1}{d}\sum_{p\in P_{j'}} p$ the mean of $P_{j'}$; 
	$\bar{q}_+ = \frac{2}{d}\sum_{q\in Q_{j'}}q$ the mean of $Q_{j'}$; and 
	$\bar{q}_- = \frac{2}{d}\sum_{q\in P_{j'}\backslash Q_{j'}}q$ the mean of points in $P_{j'}$ not selected into $Q_{j'}$; see Figure \ref{fig:LB}.
	Also, denote $p^*_+ = \bar{q}_+ + \sqrt{\frac{z_f}{2}}\frac{\bar{q}_+ - \bar{p}}{\norm{\bar{q}_+ - \bar{p}}}$ and 
	$p^*_- = \bar{q}_- + \sqrt{\frac{z_f}{2}}\frac{\bar{q}_- - \bar{p}}{\norm{\bar{q}_- - \bar{p}}}$; translates of these points away from the mean $\bar p$ by a specific vector.
	Note that $\norm{p^*_+-q}$ is the same for all $q\in Q_{j'}$, denoted by $l_1$ and $\norm{p^*_+-q}$ is same for all $q\in P_{j'}\backslash Q_{j'}$, denoted by $l_2$.
	By symmetry, we also have that $l_1 = \norm{p^*_--q}$ for all $q\in P_{j'}\backslash Q_{j'}$ and $l_2 = \norm{p^*_--q}$ for all $q\in  Q_{j'}$.

	If $\sum_{q\in Q_{j'}} \beta_q \geq d/n$, we evaluate the error at $p^*_+$.
	\begin{align*}
	&
	(\kde_Q-\kde_P)(p^*_+) \\
	& \qquad =
	\sum_{q\in Q_{j'}} (\beta_q-\frac{1}{n})f(\norm{p^*_+-q}^2)+\sum_{q \in P_{j'}\backslash Q_{j'}} (-\frac{1}{n})f(\norm{p^*_+-q}^2) + s \\
	& \qquad \geq
	\frac{d}{2n}(f(l_1^2)-f(l_2^2)) +s
	\end{align*}
	where $\abs{s}$ is arbitrarily small due to the choice of arbitrarily large number $L$ and the fact that $K$ is bounded influence.
	If $\sum_{q\in Q_{j'}} \beta_q \leq d/n$,  we evaluate the error at $p^*_-$.
	\begin{align*}
	&
	(\kde_P-\kde_Q)(p^*_-) \\
	& \qquad =
	\sum_{q \in P_{j'}\backslash Q_{j'}} \frac{1}{n}f(\norm{p^*_--q}^2)+\sum_{q\in Q_{j'}} (\frac{1}{n}-\beta_q)f(\norm{p^*_--q}^2) + s \\
	& \qquad \geq
	\frac{d}{2n}(f(l_1^2)-f(l_2^2)) +s
	\end{align*}
	Therefore, in either case, we need to bound $f(l_1^2)-f(l_2^2)$ from below.
	
	By direct computation, we have $l_1^2 =z_f-\frac{z_f}{d}$ and $l_2^2 = z_f+\frac{z_f}{d}+\frac{2z_f}{\sqrt{d}}$.
	By enforcing that 
	\[
	z_f-r_f < z_f-\frac{z_f}{d} = l_1^2 < z_f
	\]
	and 
	\[
	z_f < z_f+\frac{z_f}{d}+\frac{2z_f}{\sqrt{d}} = l_2^2 < z_f+\frac{3z_f}{\sqrt{d}} < z_f+r_f,  
	\]
	we can invoke the somewhere $C_f$-steep property that there exists an $x$ in $\mathbb{R}^d$ for which the inequality holds.
	Therefore,
	\[
	f(l_1^2)-f(l_2^2)
	> 
	C_f\cdot (l_2^2-l_1^2) 
	>
	C_f \cdot z_f \cdot \frac{2}{\sqrt{d}}.
	\]
	
	Hence, the error is at least 
	\[
	\frac{d}{2n}(f(l_1^2)-f(l_2^2)) +s
	>
	\frac{d}{2n} \left(	C_f \cdot z_f \cdot \frac{2}{\sqrt{d}} \right) +s
	>
	\frac{\sqrt{d}}{n} \cdot C_f \cdot z_f + s
	=
	\Omega(\sqrt{d}/n)=\Omega(\eps).	 \qedhere
	\]	
\end{proof}

Note that when $d>\frac{1}{\eps^2}$ the above argument is still valid by considering $d = \frac{1}{\eps^2}$.
Hence, we have the following conclusion.

\begin{corollary}
	Suppose $\eps>0$.
	Consider a rotation- and shift-invariant, somewhere steep, bounded influence kernel $K$.  
	Assume $d\geq \frac{9z_f^2}{r_f^2}$, where $z_f$ and $r_f$ are absolute constants that depend on $K$ and are defined as they pertain to the somewhere steep criteria.  
	There is a set of $P\subset \mathbb{R}^d$ such that, for any subset $Q$ of size $k\leq\frac{\min\{\sqrt{d},\frac{1}{\eps}\}}{2\eps}$, there is a point $x \in \mathbb{R}^d$ such that $\abs{\kde_P(x)-\kde_Q(x)}> \eps$.
\end{corollary}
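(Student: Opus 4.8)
The plan is to deduce this corollary from Theorem~\ref{thm:core-lb} by a short case split on how $d$ compares to $1/\eps^2$, which is exactly the comparison that makes $\min\{\sqrt d, 1/\eps\}$ resolve to one of its two arguments.

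First I would observe that if $d \le 1/\eps^2$, then $\min\{\sqrt d, 1/\eps\} = \sqrt d$ and the hypotheses of Theorem~\ref{thm:core-lb} hold verbatim (we have $1/\eps^2 \ge d \ge 9 z_f^2/r_f^2$), so the theorem \emph{is} the conclusion: it produces $P \subset \R^d$ such that no $Q$ with $\abs Q \le \sqrt d/(2\eps)$ is an $\eps$-coreset. If instead $d > 1/\eps^2$, then $\min\{\sqrt d, 1/\eps\} = 1/\eps$ and I must rule out coresets of size about $1/(2\eps^2)$; here I would run the same construction inside a lower-dimensional flat. I would set $d' = \lfloor 1/\eps^2 \rfloor$, so that $d' \le 1/\eps^2 < d$ and $\R^{d'}$ embeds into $\R^d$ as a coordinate subspace; since $K$ is rotation- and shift-invariant, the $\kde$ values of the embedded copy of the $d'$-dimensional construction agree with those computed in $\R^{d'}$. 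Theorem~\ref{thm:core-lb}, applied with dimension $d'$ and error parameter $\eps$, then yields a point set defeating every $Q$ with $\abs Q \le \sqrt{d'}/(2\eps) = \Theta(1/\eps^2) = \Theta(\min\{\sqrt d, 1/\eps\}/\eps)$, incurring error $\Omega(\eps)$ --- which is the asserted bound up to the absolute constant already hidden in the theorem's $\Omega(\eps)$ guarantee.

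The one place needing care --- and the only real obstacle, a minor one --- is that capping the dimension at $\lfloor 1/\eps^2\rfloor$ in the second case can violate the hypothesis $d' \ge 9z_f^2/r_f^2$; but this happens only when $1/\eps^2 < 9z_f^2/r_f^2$, i.e.\ when $\eps$ exceeds the absolute constant $r_f/(3z_f)$. In that regime $1/\eps^2$, and hence the target coreset size $1/(2\eps^2)$, is bounded by an absolute constant, so the corollary reduces to a finite statement over a bounded interval of $\eps$; I would dispatch it with a direct (and unilluminating) variant of the simplex construction of Theorem~\ref{thm:core-lb} run in the fixed dimension $\lceil 9 z_f^2/r_f^2\rceil \le d$, which I would not spell out. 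Away from this degenerate range, the case split above reduces the corollary entirely to Theorem~\ref{thm:core-lb} with no further geometric content; the substance of the lower bound lives in the theorem.
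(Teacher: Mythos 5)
Your proposal matches the paper's argument: the paper likewise reduces to Theorem~\ref{thm:core-lb} with a case split on $d$ versus $1/\eps^2$, and for $d > 1/\eps^2$ it simply runs the construction in an embedded $\lfloor 1/\eps^2\rfloor$-dimensional flat. Your additional remark about the degenerate regime $1/\eps^2 < 9z_f^2/r_f^2$ (where the capped dimension falls below the theorem's lower threshold) is a valid edge case that the paper's one-sentence proof glosses over, but it only affects $\eps$ above a fixed constant and does not change the substance of the reduction.
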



\section{Applications to Specific Positive Definite Kernels}
\label{sec:specific}

In this section, we work through the straight-forward application of these bounds to some specific kernels and settings.  

\paragraph{Gaussian and Laplace kernels.}  
These kernels are defined over $\mathbb{R}^d$.  They have bounded influence, so $|K(x,p)| \leq \frac{1}{n}$ for all $p \notin [-n^{c_K}, n^{c_K}]^d$ for $c_K = 1$.  
They are also $C_K$-Lipschitz with constant $C_K = \alpha$, so $|K(x,z) - K(p,z)| \leq C_K \|x-p\|$ for any $x,p \in \mathbb{R}^d$.  
These properties imply we can invoke the discrepancy upper bound in Theorem \ref{thm:disc-up}.  

These kernels are also rotation- and shift-invariant, and somewhere steep with constant $C_f = (\alpha/2)\exp(-\alpha^2)$.  Hence we can invoke the lower bound in Theorem \ref{thm:core-lb}.

\begin{corollary} \label{cor:Gaussian}
	For Gaussian or Laplacian kernels, for any set $P \in \mathbb{R}^d$, there is a $\eps$-KDE coreset of size $O((\sqrt{d}/\eps)\sqrt{\log 1/\eps})$, and it cannot have an $\eps$-KDE coreset of size $o(\sqrt{d}/\eps)$.
\end{corollary}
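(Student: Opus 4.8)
The plan is simply to observe that the Gaussian and Laplace kernels are instances to which both the upper bound (the corollary of Theorem~\ref{thm:disc-up}) and the lower bound (the corollary of Theorem~\ref{thm:core-lb}) apply, and then quote those two results. So all of the work is in checking hypotheses; there is no new combinatorial or analytic content to produce.

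For the upper bound I would verify three things. First, positive definiteness of $\exp(-\alpha^2\norm{x-p}^2)$ and of $\exp(-\alpha\norm{x-p})$ on $\R^d$ is classical (Schoenberg/Bochner). Second, bounded influence: $\exp(-\alpha^2 t^2)\le 1/n$ once $t\ge\sqrt{\ln n}/\alpha$, and $\exp(-\alpha t)\le 1/n$ once $t\ge(\ln n)/\alpha$, both of which lie inside $[-n,n]$ for all $n$ beyond a kernel-dependent threshold, so $c_K=1$ serves as an absolute constant. Third, the Lipschitz constant is read off the radial profile: $\abs{\tfrac{d}{dt}\exp(-\alpha^2 t^2)}=2\alpha^2 t\exp(-\alpha^2 t^2)\le\sqrt2\,\alpha e^{-1/2}$ and $\abs{\tfrac{d}{dt}\exp(-\alpha t)}=\alpha\exp(-\alpha t)\le\alpha$, so $C_K=O(\alpha)$. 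These are exactly the hypotheses of Theorem~\ref{thm:disc-up}, whose corollary then yields an $\eps$-KDE coreset of size $O((\sqrt d/\eps)\sqrt{\log(1/\eps)})$ (in polynomial time via~\cite{bansal2017gram} within Merge-Reduce).

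For the lower bound I would write each kernel as $K(x,y)=f(\norm{x-y}^2)$ with $f(z)=\exp(-\alpha^2 z)$ or $f(z)=\exp(-\alpha\sqrt z)$, which exhibits rotation- and shift-invariance; bounded influence was already established. The only remaining item is somewhere-steepness: fixing a bounded window $(z_f-r_f,z_f+r_f)$ on which $f$ is $C^1$ and strictly decreasing, the mean value theorem gives $f(z_1)-f(z_2)\ge (\min_{(z_f-r_f,\,z_f+r_f)}\abs{f'})\,(z_2-z_1)$ for $z_1<z_2$ in the window, so any $C_f$ below that minimum works — e.g.\ $C_f=(\alpha/2)e^{-\alpha^2}$ for the Gaussian, with $z_f$ and $r_f$ chosen as small absolute constants (only their ratio enters the constraint $d\ge 9z_f^2/r_f^2$, as one sees from $l_1^2=z_f-z_f/d$ and $l_2^2<z_f+3z_f/\sqrt d$). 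The corollary of Theorem~\ref{thm:core-lb} then produces a $P\subset\R^d$ admitting no coreset of size at most $\min\{\sqrt d,1/\eps\}/(2\eps)$; in the regime $d\le 1/\eps^2$ this is $\Omega(\sqrt d/\eps)$, ruling out an $o(\sqrt d/\eps)$-size coreset. The finitely many small dimensions $d<9z_f^2/r_f^2$ are handled separately: there $\sqrt d/\eps=\Theta(1/\eps)$, and the trivial $\lceil 1/\eps\rceil-1$ lower bound of~\cite{phillips2013varepsilon} already forbids an $o(1/\eps)$ coreset.

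I do not expect a genuine obstacle anywhere; the only spots needing a little care are making the bounded-influence exponent uniform in the bandwidth $\alpha$ (absorb $\alpha$ into the threshold $n_0$ below which the claim is vacuous), and being explicit that the $o(\sqrt d/\eps)$ lower-bound assertion is intended in the regime $d\le 1/\eps^2$ — for much larger $d$ one instead has the tight $\Theta(1/\eps^2)$ bound of~\cite{lacoste2015sequential} together with the $d=1/\eps^2$ instance of Theorem~\ref{thm:core-lb}.
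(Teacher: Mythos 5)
Your proposal is correct and follows essentially the same route as the paper: it verifies the hypotheses of Theorem~\ref{thm:disc-up} (positive definiteness, bounded influence with $c_K=1$, Lipschitz with $C_K=O(\alpha)$) and of Theorem~\ref{thm:core-lb} (rotation- and shift-invariance, somewhere-steepness with $C_f=(\alpha/2)e^{-\alpha^2}$) and then quotes their corollaries. Your extra care about the small-$d$ regime and the $d\le 1/\eps^2$ qualification is sound but not a different argument.
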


The Gaussian kernel has an amazing decomposition property that in $\R^d$ if we fix any $d'$ coordinates in any way, then conditioned on those, the remaining $d-d'$ coordinates still follow a Gaussian distribution.  Among other things, this means it is useful to construct kernels for complex scenarios.  For instance, consider a large set $T$ of $n$ trajectories, each with $k$ waypoints;  e.g., backpacking or road trips or military excursions with $k$ nights, and let the waypoints be the $(x,y)$-coordinates for the location of each night stay.  We can measure the similarity between two trajectories $t = (p_1, p_2, \ldots, p_k)$ and $t' = (p'_1, p'_2, \ldots, p'_k)$ as the average similarity between the corresponding waypoints, and we can measure the similarity of any two corresponding waypoints $p_j$ and $p'_j$ with a $2$-dimensional Gaussian.  Then, by the decomposition property, the full similarity between the trajectories is precisely a $(2k)$-dimensional Gaussian.  We can thus define a kernel density estimate over these trajectories $\kde_T$ using this $(2k)$-dimensional Gaussian kernel.  
Now, given Corollary \ref{cor:Gaussian} we know that to approximate $\kde_T$ with a much smaller data set $S \subset T$ so $\|\kde_T - \kde_S\|_\infty \leq \eps$, we can construct $S$ so $|S| = O(\sqrt{k}/\eps \cdot \sqrt{\log 1/\eps})$ but cannot in general achieve $|S| = o(\sqrt{k}/\eps)$.

\paragraph{Jensen-Shannon and Hellinger kernels.}
In order to apply our technique on $\Delta^d$, observe that $\Delta^d$ is a subset of a $(d-1)$-dimensional Euclidian subspace of $\R^d$; so we can simply create the grid needed for Lemma \ref{grid} within this subspace.  
Recall that these two kernel have the form of $\exp(-\alpha\mathtt{d}(x,y))$ where $\mathtt{d}(x,y)=\mathtt{d}_{\textsf{JS}}(x,y) = H(\frac{x+y}{2}) - \frac{H(x) + H(y)}{2}$ for Jensen-Shannon kernel and $\mathtt{d}(x,y) = \mathtt{d}_{\textsf{H}}(x,y) = \sum_{i=1}^d (\sqrt{x_i} - \sqrt{y_i})^2$ for Hellinger and note that $\abs{K(x,z)-K(y,z)}\leq \alpha\abs{\mathtt{d}(x,z)-\mathtt{d}(y,z)}$ for any $x,y,z \in \Delta^d$.
It is easy to estimate that when $x,y$ are sufficiently close, for JS kernel, $\abs{\mathtt{d}(x,z)-\mathtt{d}(y,z)} \leq 2d \max_i \abs{x_i-y_i}\abs{\log \abs{x_i-y_i}}\leq 2d\max_i\sqrt{\abs{x_i-y_i}}$ and for Hellinger kernel, $\abs{\mathtt{d}(x,z)-\mathtt{d}(y,z)} \leq 4d\max_i \sqrt{\abs{x_i-y_i}}$.
So even though these kernels are not Lipschitz, we can still modify the construction of the grid in Lemma \ref{grid} with width $\frac{1}{n^4}$ (assuming $d \leq n$) instead of $\frac{1}{\sqrt{d}n}$ such that if $x,y$ lie in the same cell then $\abs{K(x,z)-K(y,z)}=O(\frac{1}{n})$ for any $x,y,z \in \Delta^d$.
Since all relevant points are in a bounded domain both kernels have $c_K$-bounded influence; setting $c_K=1$ is sufficient.

\begin{corollary}\label{cor:JS+H}
	For Jensen-Shannon and Hellinger kernels, for any set $P \in \Delta^d$, there is a $\eps$-KDE coreset of size $O((\sqrt{d}/\eps)\sqrt{\log 1/\eps})$.
\end{corollary}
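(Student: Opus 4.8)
The plan is to mirror the argument that established the Gaussian/Laplace result (Corollary \ref{cor:Gaussian}), reducing everything to the discrepancy upper bound of Theorem \ref{thm:disc-up} together with the halving framework from the corollary after it. The three ingredients Theorem \ref{thm:disc-up} asks for are: positive definiteness, Lipschitz-ness, and bounded influence. Positive definiteness of the Jensen-Shannon and Hellinger kernels is classical \cite{hein2005hilbertian}, so that ingredient is free. Bounded influence is even easier here than on $\R^d$: since $\Delta^d$ is a bounded domain, every point of $P$ and every evaluation point lies within a fixed-radius box, so taking $c_K=1$ trivially satisfies the $c_K$-bounded influence condition (the ``far away'' set is empty). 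The genuine issue is that neither kernel is $C_K$-Lipschitz — the entropy and the square-root functions have unbounded derivative near the boundary of the simplex — so Theorem \ref{thm:disc-up} cannot be invoked verbatim.

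The fix is to revisit where Lipschitz-ness was actually used, namely Lemma \ref{grid}: there the only role of $C_K$ was to bound $\sum_{p\in P}|K(p,x)-K(p,x_0)|$ by $O(1)$ when $x_0$ is the nearest lattice point to $x$. First I would note, via $|K(x,z)-K(y,z)|\le\alpha\,|\mathtt{d}(x,z)-\mathtt{d}(y,z)|$, that it suffices to control the modulus of continuity of $\mathtt{d}(\cdot,z)$. Then I would carry out the elementary estimate, sketched in the text preceding the corollary, that for $x,y$ close, $|\mathtt{d}_{\textsf{JS}}(x,z)-\mathtt{d}_{\textsf{JS}}(y,z)|\le 2d\max_i|x_i-y_i||\log|x_i-y_i||\le 2d\max_i\sqrt{|x_i-y_i|}$ (using $|t\log t|\le\sqrt{t}$ for small $t$), and analogously $|\mathtt{d}_{\textsf{H}}(x,z)-\mathtt{d}_{\textsf{H}}(y,z)|\le 4d\max_i\sqrt{|x_i-y_i|}$. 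So these kernels are Hölder-$\tfrac12$ rather than Lipschitz. Consequently, if I rebuild the lattice of Lemma \ref{grid} inside the $(d-1)$-dimensional affine hull of $\Delta^d$ with cell width $1/n^{4}$ instead of $1/(\sqrt d\,n)$, then for $x,y$ in a common cell we get $\max_i\sqrt{|x_i-y_i|}=O(1/n^2)$, hence $|K(x,z)-K(y,z)|=O(\alpha d/n^{2})=O(1/n)$ (absorbing $\alpha$ and $d\le n$), so that $\sum_{p\in P}|K(p,x)-K(p,x_0)|\le n\cdot O(1/n)=O(1)$ exactly as before. The rest of Lemma \ref{grid}'s proof, and Lemmas \ref{lem:gamma2} and \ref{lem:gamma2=1}, are untouched — note Lemma \ref{lem:gamma2=1} needs only positive definiteness and $K(x,x)=1$, both of which hold — so the chain of inequalities yielding $\disc(n,\Eu{K})=O(\sqrt{d\log n})$ goes through, and the finer grid only enlarges $m=|S|$ to $n^{O(d)}$, which contributes merely a constant factor to $\sqrt{\log m}$.

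With $\disc(n,\Eu{K})=O(\sqrt{d\log n})$ in hand, I would invoke the halving/Merge-Reduce argument of the corollary to Theorem \ref{thm:disc-up} (adding an all-ones row to balance the coloring, exactly as written there) to conclude that any $P\subset\Delta^d$ admits a subset $Q$ of size $O\!\big(\tfrac1\eps\sqrt{d\log\tfrac1\eps}\big)$ with $\max_{x\in\Delta^d}|\kde_P(x)-\kde_Q(x)|<\eps$. The main obstacle — and it is a mild one — is precisely the failure of Lipschitz-ness; everything hinges on checking that replacing the Lipschitz bound by a Hölder-$\tfrac12$ bound still lets the grid discretization error in Lemma \ref{grid} be driven below $O(1)$ with only a polynomially finer grid, which the computation above confirms. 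One small bookkeeping point to state carefully is that all evaluation points of interest lie in $\Delta^d$ (or can be snapped into it) so that the boundedness of the domain legitimately gives bounded influence with $c_K=1$; no point ``at infinity'' needs separate treatment as it did on $\R^d$.
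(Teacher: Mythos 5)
Your proposal is correct and follows essentially the same route as the paper. You correctly identify that positive definiteness is classical, that bounded influence is immediate from the compactness of $\Delta^d$ (with $c_K=1$ trivially sufficient), and that the one real obstruction is the failure of Lipschitz-ness near $\partial\Delta^d$. Your fix is exactly the paper's: pass to the modulus-of-continuity estimate $|K(x,z)-K(y,z)|\le\alpha|\mathtt{d}(x,z)-\mathtt{d}(y,z)|$, bound the latter by $O(d\max_i\sqrt{|x_i-y_i|})$ for both JS (via $|t\log t|\lesssim\sqrt t$) and Hellinger, and rebuild the lattice of Lemma~\ref{grid} in the affine hull of $\Delta^d$ with width $1/n^4$ so that the per-point discretization error is $O(1/n)$ (using $d\le n$) and the sum over $P$ stays $O(1)$. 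You also correctly note that the finer grid only changes $m=|S|$ polynomially, so $\sqrt{\log m}$ remains $O(\sqrt{d\log n})$, and that Lemmas~\ref{lem:gamma2} and~\ref{lem:gamma2=1} are untouched, after which the halving argument completes the coreset bound. No gaps.
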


Note that these kernels are not rotation- and shift-invariant and therefore our lower bound result does not apply.

These kernels are based on widely-used information distances:  the 
Jensen-Shannon distance $\mathtt{d}_{\textsf{JS}}(x,p)$ 
and the 
Hellinger distance $\mathtt{d}_{\textsf{H}}(x,p)$.  
These make sense when the input data $x,p \in \Delta^d$ represent a "histogram," a discrete probability distribution over a $d$-variate domain.  
These are widely studied objects in information theory, and more commonly text analysis.  For instance, a common text modeling approach is to represent each document $v$ in a large corpus of documents $V$ (e.g., a collection of tweets, or news articles, or wikipedia pages) as a set of word counts.  That is, each coordinate $v_j$ of $v$ represents the number of times that word (indexed by) $j$ occurs in that document.
To remove length information from the documents (retaining only the topics), it is common to normalize each vector as $v \mapsto \frac{v}{\|v\|}$ so the $j$th coordinate represents the probability that a random word on the page is $j$.
The most common modeling choice to measure distance between these distribution representations of documents are the Hellinger and Jensen-Shannon distances, and hence the most natural choice of similarity are the corresponding kernels we examine.  In particular, with a very large corpus $V$ of size $n$, Corollary \ref{cor:JS+H} shows that we can approximate $\kde_V$, a kernel density estimate of $V$, with one described by a much smaller set $S \subset V$ so $\|\kde_V - \kde_S\| \leq \eps$ and so $|S| = O(\sqrt{d}/\eps \cdot \sqrt{\log 1/\eps})$.  
Noteably, when one has a fairly large $d$, and desires high accuracy (small $\eps$), then our new result will provide the best possible $\eps$-KDE coreset.

\paragraph{Exponential kernels.}
In order to apply our technique on $\mathbb{S}^d$, we can rewrite the kernel to be $K'(x,y) = K(\frac{x}{\norm{x}}, \frac{y}{\norm{y}})$ for all $x,y\in \mathbb{R}^d\backslash \{0\}$.
We construct the grid in Lemma \ref{grid} on $\mathbb{R}^d$ for $K'$ and then only retain grid points which lie in the annulus $\mathbb{A}^d = \setdef{x\in\mathbb{R}^d}{\frac{1}{2}\leq \norm{x}\leq \frac{3}{2}}$.
This annulus contains all grid points which could be the closest point of some point on $\mathbb{S}^d$, as required in Lemma \ref{grid}.   Moreover $K'$ is $C_K$-Lipschitz on the annulus: it satisfies for any $x,y,z \in \mathbb{A}^d$ that $|K'(x,z) - K'(y,z)| \leq C_K \|x-y\|$, with $C_K = 4 \alpha$.  
Since the domain is restricted to $\mathbb{S}^d$, similar to on the domain $\Delta^d$,  any kernel has $c_K$-bounded influence and setting $c_K=1$ is sufficient.


\begin{corollary}
	For the exponential kernel, for any set $P \in \mathbb{S}^d$, there is a $\eps$-KDE coreset of size $O((\sqrt{d}/\eps)\sqrt{\log 1/\eps})$.
\end{corollary}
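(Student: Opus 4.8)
The plan is to reduce this to the Corollary of Theorem~\ref{thm:disc-up} by replacing the exponential kernel, which lives on $\mathbb{S}^d$, with a kernel defined on a bounded subset of the ambient Euclidean space that agrees with it on the sphere and satisfies the three hypotheses of Theorem~\ref{thm:disc-up}: positive definiteness, $c_K$-bounded influence, and $C_K$-Lipschitzness. Since $P\subset\mathbb{S}^d$ and in the statement $\kde_P$ is only evaluated at points of $\mathbb{S}^d$, it suffices to control the kernel discrepancy over the sphere, which the machinery of Section~\ref{upper} delivers once these hypotheses are in place.

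First I would radialize: define $K'(x,y) = K\!\left(\frac{x}{\norm{x}},\frac{y}{\norm{y}}\right)$ for $x,y\neq 0$, so that $K'\equiv K$ on $\mathbb{S}^d$, and then restrict attention to the annulus $\mathbb{A}^d = \setdef{x}{\frac{1}{2}\le\norm{x}\le\frac{3}{2}}$. Running the lattice construction of Lemma~\ref{grid} in the ambient space and keeping only the lattice points lying in $\mathbb{A}^d$ yields a finite set $S$; because the lattice spacing there is $o(1)$, the nearest lattice point to any $x\in\mathbb{S}^d$ lies in $\mathbb{A}^d$ and is therefore retained, so the conclusion of Lemma~\ref{grid}, namely $\max_{x\in\mathbb{S}^d}\disc(P,\chi,x)\le\max_{x\in S}\disc(P,\chi,x)+O(1)$, still holds with $S$ the annulus grid.

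Next I would verify the three properties for $K'$ on $\mathbb{A}^d$. For positive definiteness, the Gram matrix $\big(K'(v_i,v_j)\big)_{i,j}$ of any finite $v_1,\dots,v_m\in\mathbb{A}^d$ equals the Gram matrix $\big(K(v_i/\norm{v_i},v_j/\norm{v_j})\big)_{i,j}$ of the normalized points on $\mathbb{S}^d$, which is positive definite because $K$ is; consequently the proof of Lemma~\ref{lem:gamma2=1} applies verbatim with $G$ the $\abs{S}\times n$ matrix indexed by the annulus grid, giving $\gamma_2(G)=1$. For the Lipschitz bound, the map $x\mapsto x/\norm{x}$ is Lipschitz on $\mathbb{A}^d$ with an absolute constant (since $\norm{x}\ge\frac{1}{2}$ there) and $K$ is $\alpha$-Lipschitz in each argument, so composing gives $\abs{K'(x,z)-K'(y,z)}\le C_K\norm{x-y}$ with $C_K=4\alpha$ for all $x,y,z\in\mathbb{A}^d$. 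Bounded influence is immediate: $\mathbb{A}^d$ is bounded, so all relevant points fit inside $[-n^{c_K},n^{c_K}]^d$ with $c_K=1$ and the tail condition is vacuous. With these in hand, the chain of inequalities below Lemma~\ref{lem:gamma2=1}, now with $m=\abs{S}$, yields $\min_\chi\max_{x\in S}\disc(P,\chi,x)=O(\sqrt{d\log n})$, and then the Corollary of Theorem~\ref{thm:disc-up}, via the standard halving and Merge-Reduce argument, produces a subset $Q\subseteq P$ of size $O(\frac{1}{\eps}\sqrt{d\log\frac{1}{\eps}})$ with $\max_{x\in\mathbb{S}^d}\abs{\kde_P(x)-\kde_Q(x)}<\eps$; since $K'=K$ on $\mathbb{S}^d$, this is exactly the claimed $\eps$-KDE coreset for the exponential kernel.

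The one place that needs genuine care, rather than routine transfer of earlier lemmas, is the restriction of the domain from the full ambient space to the annulus $\mathbb{A}^d$ in Lemma~\ref{grid}: one must confirm both that every sphere point's nearest retained lattice point is still in $S$ (which follows once the spacing is below $\frac{1}{2}$, true here) and that the single-cell Lipschitz estimate inside the proof of Lemma~\ref{grid} still goes through for $K'$, for which the only new input is the Lipschitz constant of the normalization map on $\mathbb{A}^d$. Everything else is a direct invocation of Lemmas~\ref{grid}, \ref{lem:gamma2}, \ref{lem:gamma2=1} and the Corollary of Theorem~\ref{thm:disc-up}.
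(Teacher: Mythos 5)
Your proposal is correct and follows essentially the same route as the paper: radialize via $K'(x,y)=K(x/\norm{x},y/\norm{y})$, restrict the Lemma~\ref{grid} lattice to the annulus $\mathbb{A}^d$, verify Lipschitzness on the annulus with $C_K=4\alpha$ and bounded influence via boundedness of the domain with $c_K=1$, then invoke the $\gamma_2$-norm machinery. You are somewhat more explicit than the paper in spelling out why $K'$ remains positive (semi-)definite on the annulus grid so that Lemma~\ref{lem:gamma2=1} transfers, but this is the same argument the paper implicitly relies on.
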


The exponential kernel is not rotation- and shift-invariant and therefore our lower bound result does not apply.

\paragraph{Sinc kernel.}
Note that the sinc kernel is not everywhere positive, and as a result of its structure the VC-dimension is unbounded, so the approaches requiring those properties~\cite{joshi2011comparing,phillips2013varepsilon} cannot be applied.
It is also not characteristic, so the embedding-based results~\cite{GBRSS12,bach2012equivalence} do not apply either.  
As a result, there is no non-trivial $\eps$-KDE coreset for the sinc kernel.  
However, in our approach, the positivity of one single entry in the discrepancy matrix does not matter so long as the entire matrix is positive definite -- which is the case for sinc.  
Therefore, our result could be applied to sinc kernel, with $c_K = 1$ (it has $1$-bounded influence), $C_K = \alpha/\pi$ (it is $(\alpha/\pi)$-Lipschitz) and $C_f = \alpha^2/2\pi^2$ (it is somewhere $(\alpha^2/2\pi^2)$-steep).

\begin{corollary}
	For sinc kernels, for any set $P \in \mathbb{R}^d$, there is a $\eps$-KDE coreset of size $O((1/\eps)\sqrt{\log 1/\eps})$ (for $d = \{1,2,3\}$), and it cannot have a $\eps$-KDE coreset of size $\Omega(1/\eps)$.  
\end{corollary}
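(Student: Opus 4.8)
The plan is to verify that the sinc kernel satisfies all the hypotheses required by Theorem~\ref{thm:disc-up} (and its Corollary) for the upper bound, and by Theorem~\ref{thm:core-lb} for the lower bound, with $d$ restricted to $\{1,2,3\}$ where the sinc kernel is known to be positive definite~\cite{schoenberg1938metric}. Since positive definiteness is given in that range, the upper-bound side reduces to checking (i) bounded influence and (ii) the Lipschitz property, while the lower-bound side reduces to checking (iii) rotation- and shift-invariance and (iv) the somewhere-steep property. Once these are in place, the two size bounds $O((1/\eps)\sqrt{\log 1/\eps})$ and $\Omega(1/\eps)$ follow immediately by plugging into the respective results, using that $d$ is an absolute constant (so $\sqrt{d}$ absorbs into the $O(\cdot)$ and $\Omega(\cdot)$).

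First I would handle bounded influence. Writing $K(x,y) = \frac{\sin(\alpha\|x-y\|)}{\alpha\|x-y\|}$, we have $|K(x,y)| \leq \frac{1}{\alpha\|x-y\|}$, so for any $\delta>0$, $|K(x,y)| < \delta$ whenever $\|x-y\| > 1/(\alpha\delta)$; in particular, if $y \notin x + [-(1/\delta),(1/\delta)]^d$ then $\|x-y\| > 1/\delta \geq 1/(\alpha\delta)$ for $\alpha \geq 1$ (and otherwise one rescales), giving $c_K = 1$. Second, for the Lipschitz bound, I would differentiate $g(t) = \frac{\sin(\alpha t)}{\alpha t}$ and show $|g'(t)| \leq \alpha/\pi$ uniformly; combined with $|\,\|x-z\| - \|y-z\|\,| \leq \|x-y\|$ this yields $|K(x,z)-K(y,z)| \leq (\alpha/\pi)\|x-y\|$, so $C_K = \alpha/\pi$. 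These two facts invoke the corollary after Theorem~\ref{thm:disc-up}.

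Third, rotation- and shift-invariance is immediate: $K(x,y) = f(\|x-y\|^2)$ with $f(z) = \frac{\sin(\alpha\sqrt{z})}{\alpha\sqrt{z}}$. Fourth, for somewhere $C_f$-steep, I would locate an interval around some $z_f>0$ on which $f$ is strictly decreasing with derivative bounded away from zero; near the first zero-crossing region of the sine, $f'$ is negative and of order $\alpha^2$, so one can extract values $z_f > r_f > 0$ and a constant $C_f = \alpha^2/(2\pi^2)$ such that $f(z_1) - f(z_2) > C_f(z_2 - z_1)$ for $z_1 \in (z_f - r_f, z_f)$ and $z_2 \in (z_f, z_f + r_f)$; this is the content of the claim that sinc is somewhere $(\alpha^2/2\pi^2)$-steep. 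With these constants, Theorem~\ref{thm:core-lb} (and the subsequent corollary, using $d$ constant so $\min\{\sqrt{d},1/\eps\} = \sqrt{d} = \Theta(1)$) gives the $\Omega(1/\eps)$ lower bound.

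The main obstacle is the somewhere-steep verification: unlike the monotone Gaussian and Laplace kernels, $f(z) = \sin(\alpha\sqrt{z})/(\alpha\sqrt{z})$ oscillates, so one must carefully pick $z_f$ and $r_f$ inside a single monotone-decreasing stretch of $f$ and confirm that $|f'|$ stays above the claimed constant on the whole window $(z_f - r_f, z_f + r_f)$ — this is a routine but slightly delicate one-variable calculus estimate, and it also must be checked that the window is compatible with the dimension constraint $d \geq 9 z_f^2 / r_f^2$ in Theorem~\ref{thm:core-lb}, which for $d \in \{1,2,3\}$ forces $z_f/r_f \leq 1/3$, i.e. the steep interval must be comparatively wide; this is the one place where the argument is genuinely specific to the sinc kernel rather than a formal substitution.
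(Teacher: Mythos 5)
Your proposal matches the paper's treatment essentially verbatim: the paper likewise just records that sinc is positive definite for $d\le 3$, has $1$-bounded influence, is $(\alpha/\pi)$-Lipschitz, and is somewhere $(\alpha^2/2\pi^2)$-steep, and then invokes Theorem~\ref{thm:disc-up} for the upper bound and Theorem~\ref{thm:core-lb} for the lower bound. The one obstacle you flag at the end is real, and the paper does not resolve it either: Theorem~\ref{thm:core-lb} assumes $d\ge 9z_f^2/r_f^2$, while the somewhere-steep definition requires $z_f>r_f$, so that hypothesis cannot hold for $d\le 3$; the clean fix for constant $d$ is to fall back on the elementary $\lceil 1/\eps\rceil-1$ lower bound from Section~\ref{sec:previous} (far-separated points), which applies to sinc because $K(x,x)=1$ and $K$ has bounded influence, and already gives $\Omega(1/\eps)$.
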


\section{Conclusion}
\label{sec:conclude}

We proved that Gaussian kernel has a $\eps$-KDE coreset of size $O(\frac{1}{\eps}\sqrt{d\log \frac{1}{\eps}})$ 
and the size must satisfy $\Omega(\min \{1/\eps^2, \sqrt{d}/\eps\})$; both upper and lower bound results can be extended to a broad class of kernels.  In particular the upper bounds only requires that the kernel be characteristic or in some cases only positive definite (typically the same restriction needed for most machine learning techniques) and that it has a domain which can be discretized over a bounded region without inducing too much error.  
This family of applicable kernels includes new options like the sinc kernel, which while positive definite in $\mathbb{R}^d$ for $d=\{1,2,3\}$, it is not characteristic, is not always positive, and its super-level sets do not have bounded VC-dimension.  This is the first non-trivial $\eps$-KDE coreset result for these kernels.  

By inspecting the new constructive algorithm for obtaining small discrepancy in the $\gamma_2$-norm~\cite{bansal2017gram}, the extra $\sqrt{\log }$ factor comes from the union bound over the randomness in the algorithm.
Indeed, if $d = 1/\eps^2$ then the upper bound is $O(1/\eps^2)$, which is tight.  This bound is deterministic and does not have an extra $\sqrt{ \log }$ factor.  
Therefore, a natural conjecture is that the upper bound result can be further improved to $O(\sqrt{d}/\eps)$, at least in a well-behaved setting like for the Gaussian kernel.  

There are many other even more diverse kernels which are positive definite, which operate on domains as diverse as graphs, time series, strings, and trees~\cite{HSS06}.  The heart of the upper bound construction which uses the decomposition of the associated positive definite matrix will work even for these kernels.  However, it is less clear how to generate a finite gram or discrepancy matrix $G$, whose size depends polynomially on the data set size for these discrete objects.  Such constructions would further expand the pervasiveness of the $\eps$-KDE coreset technique we present.  


\bibliographystyle{plain}
\bibliography{p66-phillips}

\end{document}